\documentclass{article}

\PassOptionsToPackage{numbers, compress}{natbib}

\usepackage[preprint]{neurips_2026}

\usepackage[utf8]{inputenc} 
\usepackage[T1]{fontenc}    
\usepackage{hyperref}       
\usepackage{url}            
\usepackage{booktabs}       
\usepackage{amsfonts}       
\usepackage{nicefrac}       
\usepackage{microtype}      
\usepackage{xcolor}         

\usepackage{amsmath,amsfonts,bm}

\def\eqref#1{equation~\ref{#1}}

\def\1{\bm{1}}

\DeclareMathAlphabet{\mathsfit}{\encodingdefault}{\sfdefault}{m}{sl}
\SetMathAlphabet{\mathsfit}{bold}{\encodingdefault}{\sfdefault}{bx}{n}

\newcommand{\E}{\mathbb{E}}

\newcommand{\R}{\mathbb{R}}

\usepackage{amsthm, amsmath, graphicx}
\usepackage{float}
\newcommand{\fitwidth}[1]{\resizebox{\ifdim\width>\linewidth\linewidth\else\width\fi}{!}{#1}}
\theoremstyle{definition}
\newtheorem{definition}{Definition}[section]
\newtheorem{theorem}{Theorem}[section]
\newtheorem{proposition}{Proposition}[section]

\newtheorem{lemma}[theorem]{Lemma}
\newtheorem{example}[theorem]{Example}

\newcommand{\norm}[1]{\left\lVert #1 \right\rVert}
\newcommand{\abs}[1]{\left\lvert #1 \right\rvert}

\newcommand{\es}{\mathrm{es}}
\newcommand{\EF}{\mathrm{EF}}
\newcommand{\ES}{\mathrm{ES}}
\newcommand{\mass}{A^{\star}}

\title{Trust the Mass: Forced Weights in KV-Cache Eviction}

\author{Jack Shi \\ Stanford University \\ \texttt{jackshi@stanford.edu} \And
         Jerry Gu \\ Stanford University \\ \texttt{jerrygu@stanford.edu}}

\begin{document}

\maketitle

\begin{abstract}
Every deployed sparse-attention or KV-cache-eviction rule keeps a subset of the keys, discards the rest, and renormalizes the attention weights over the kept set. Enumerating the exact best subset under that constraint on $168{,}192$ attention rows from five models shows that keeping the largest weights is already near-optimal, since the best subset closes only a median $2$ to $5\%$ of the remaining gap to full attention. If selection closes this little, published margins between eviction methods must come from elsewhere, so we measure the bytes each method holds. In the shared evaluation pipeline, the strongest query-agnostic methods hold the full cache because their per-head selections are stored as masks, and only ragged per-head storage frees that memory. Enforcing a nominal budget on one fixed selection costs $14$ to $62$ benchmark points. We trace an $87.6$-point retrieval margin to rankings computed while the question is visible. ContourKV, a training-free allocator built from the dropped-mass statistic, wins $93$ of $160$ paired comparisons against that state of the art and loses $22$ at the byte count of the budget-enforcing baselines, and it ties the strongest of them.
\end{abstract}

\section{Introduction}\label{sec:intro}
The KV cache stores a key and a value vector for every past token at every attention head and dominates serving memory at long context, so deployed systems shrink it by evicting entries. Every deployed eviction or sparse-attention rule keeps a subset of the keys and renormalizes the softmax weights over it, then returns the renormalized average in place of the dense attention output, so the methods differ only in how the subset is chosen (\cite{snapkv,h2o,kvzip,compactor}). Classical subset approximation re-solves the weights on the points that it keeps and selects by geometry (\cite{varopt,design,feldman-langberg,balancekv}). Eviction \textit{forces} the weights. We quantify the maximum gain any kept set can achieve once the weights are forced, and we identify what published comparisons between eviction methods actually measure.

We enumerated the exact best subset under forced weights on $168{,}192$ attention rows from five models, which revealed that keeping the largest weights is already near-optimal. The best subset improves on it by a median $2$ to $5\%$ of the gap to the dense output, and a cheap swap rule recovers that median in full. The weight that selection drops predicts the exceptions (\S\ref{sec:perhead}). If selection closes this little, the margins published between eviction methods must come from elsewhere. Thus, we run them on the field's own evaluation pipeline and read the bytes that each one holds. The margins decompose into memory, query information and compute. The strongest methods that compress without seeing the question select per attention head, and the pipeline stores that selection as a mask over a non-shrinking cache, so their published quality is selection quality at full memory (\S\ref{sec:bytes}). Therefore, we find that enforcing a nominal budget on one fixed selection costs $14.1$ to $62.2$ benchmark points (\S\ref{sec:wrapped}), and an $87.6$-point retrieval margin traces to rankings computed while the question is visible (\S\ref{sec:needle}).

We deploy ContourKV, our allocation rule. It wins $93$ of $160$ comparisons against KVzip (\cite{kvzip}), the leading method in that class, while losing $22$, with its budget enforced against KVzip's full cache (memory ratios in App.~\ref{app:campaign}). At matched memory it ties Compactor (\cite{compactor}), the strongest baseline that enforces its own budget. This tie confirms the measurement at deployment scale (\S\ref{sec:ceiling}), since two rules near the subset optimum cannot separate when selection closes only a few percent of the remaining gap. ContourKV is training-free and borrows the compared methods' own importance scores, and its budget is enforced physically before any result is read.

Our contribution is the measurement and the accounting (positioning in \S\ref{sec:related}), and it does not depend on ContourKV winning. Our object is the gap to the dense output---the only quantity a compressor can optimize before the query. We note that optimizing the gap is ideal only when one cache is reused. This is the multi-turn case, where a single compressed cache serves every later query while decode keeps appending to it. Storage that lets each head hold a different number of entries is being built for that cache (\cite{tangram}). Downstream loss is separate (App.~\ref{app:scope}).

\section{Operator and Frozen-Head Measurements}\label{sec:operator}
\label{sec:perhead}\subsection{Operator}
Full details and proofs are in Appendix~\ref{app:proofs}.
\begin{definition}[Terminology]\label{def:instance}
An \textit{instance} is a strictly positive probability vector $p$ on $[N]=\{1,\dots,N\}$ with points $v_1,\dots,v_N\in \R^d$ and mean $\mu=\sum_j p_jv_j$. A nonempty \textit{kept set} $A\subseteq[N]$ has $p(A)=\sum_{j\in A} p_j$, where $p_j$ is the \textit{mass} of the key $j$, and it has dropped mass $\bar p(A)=1-p(A)$ and $m_A=\sum_{j\in A}(p_j/p(A))v_j$. The top-mass set $A_s^\star$ is a size-$s$ set of largest masses and $D=\max_j\Vert v_j-\mu \Vert$.
\end{definition}

At an attention head, $p$ is one query row, $v_j$ are the values, $\mu$ is the dense output, and $m_A$ is what any deployed sparse operator returns. We call $\bar m_s=\bar p(A_s^\star)$ the least achievable dropped mass. For $|A|=s$, let $\mathrm{ef}(A)=\mathrm{dist}(\mu, \mathrm{conv}\{v_j:j\in A\})$ be the error under re-solved weights and $\mathrm{es}(A)=\Vert\mu-m_A\Vert$ the error under forced weights. Their minima over $|A|=s$ are $\mathrm{EF}(s)$ and $\mathrm{ES}(s)$, and $\kappa(s)=\mathrm{es}(A_s^\star)/\mathrm{ES}(s)\geq 1$.
\begin{lemma}\label{lem:identities}
    For $0<p(A)<1$, \[
\mathrm{es}(A)\;=\;\bar p(A)\,\Vert m_{A^c}-m_A\Vert
\;=\;\frac{\Vert g_A\Vert}{p(A)},\qquad g_A:=\sum_{j\in A}p_j(v_j-\mu),
\]
and $p_j(v_j-\mu)$ sums to $0$ across all $j$. Furthermore, for every $A$, $\mathrm{es}(A)\geq \mathrm{ef}(A)$, thus $\mathrm{ES}(s)\geq \mathrm{EF}(s)$. It also holds that $\mathrm{es}(A_s^\star)\leq 2D\bar m_s$.
\end{lemma}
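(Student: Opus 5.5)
The plan is to derive everything from the mixture decomposition $\mu = p(A)\,m_A + \bar p(A)\,m_{A^c}$, which holds whenever $0<p(A)<1$. Subtracting $m_A$ gives $\mu - m_A = \bar p(A)(m_{A^c}-m_A)$, and taking norms yields the first identity.

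For the second identity, I would expand
\[
p(A)(m_A-\mu)=\sum_{j\in A}p_j v_j - p(A)\mu=\sum_{j\in A}p_j(v_j-\mu)=g_A,
\]
so that $\mathrm{es}(A)=\Vert g_A\Vert/p(A)$. The statement that the vectors $p_j(v_j-\mu)$ sum to zero over all $j$ is just $\sum_j p_jv_j-\mu\sum_j p_j=\mu-\mu=0$. It follows that $g_A=-g_{A^c}$, which I will use below. When $p(A)=1$, i.e.\ $A=[N]$, we get $\mathrm{es}=0$ trivially; strict positivity of $p$ rules out $p(A)=0$ for nonempty $A$.

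For $\mathrm{es}(A)\ge \mathrm{ef}(A)$, note that $m_A$ is a convex combination of $\{v_j:j\in A\}$ with weights $p_j/p(A)$, so it lies in $\mathrm{conv}\{v_j:j\in A\}$. Hence the distance from $\mu$ to that hull is at most $\Vert\mu-m_A\Vert$. Taking minima over $|A|=s$ then gives $\mathrm{ES}(s)\ge\mathrm{EF}(s)$: if $A^\dagger$ attains $\mathrm{ES}(s)$, then $\mathrm{EF}(s)\le \mathrm{ef}(A^\dagger)\le \mathrm{es}(A^\dagger)=\mathrm{ES}(s)$.

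For the bound $\mathrm{es}(A_s^\star)\le 2D\bar m_s$, write $A=A_s^\star$. The case $p(A)=1$ is trivial, so assume $0<p(A)<1$. Apply the first identity: since both $m_A$ and $m_{A^c}$ are convex combinations of points lying within distance $D$ of $\mu$, we have $\Vert m_{A^c}-m_A\Vert\le \Vert m_{A^c}-\mu\Vert+\Vert\mu-m_A\Vert\le 2D$. Therefore $\mathrm{es}(A)\le 2D\,\bar p(A)=2D\bar m_s$.

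An alternative route is $\mathrm{es}=\Vert g_{A^c}\Vert/p(A)\le D\bar p(A)/p(A)$, but this is weaker when $p(A)<1/2$, so the triangle-inequality route through $\mu$ is the one to use. The only real subtlety is keeping track of the degenerate cases $p(A)\in\{0,1\}$; the rest is routine, and I expect no significant obstacle.
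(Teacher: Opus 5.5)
Your proposal is correct and follows the paper's proof essentially step by step: the mixture decomposition $\mu=p(A)m_A+\bar p(A)m_{A^c}$, the expansion giving $g_A/p(A)$, membership of $m_A$ in $\mathrm{conv}\{v_j:j\in A\}$, and the $2D$ bound from both conditional means lying within distance $D$ of $\mu$. You are slightly more explicit than the paper about the degenerate case $p(A)=1$ and about passing the inequality $\mathrm{es}\ge\mathrm{ef}$ through the minimum over $|A|=s$.
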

The factorization and the last bound are known (\cite{tvtopk,ssa,caote,criticalkv}), although no prior bound addresses our object, the gap to the best subset. The problem is solved for free weights; in particular, $\mathrm{EF}(s)\leq D/\sqrt s$ and geometric minimizers that do not depend on $p$ (App.~\ref{app:proofs}). However, free and forced weights differ in value and argmin (Ex.~\ref{ex:bite}), and that difference increases with budget (App.~\ref{app:penalty}).
\begin{example}[Free and forced weights]\label{ex:bite}
Let $d=1$, $v=(-1,1,5)$, $p=(0.45,0.45,0.10)$, $s=2$. Then, we get $\EF(2)=0$ and $\ES(2)=\tfrac9{22}$, attained by $\{1,3\}$. The top-mass set $\{1,2\}$ gives $\tfrac12$, a factor $\tfrac{11}9$ above the optimum.
\end{example}
\begin{proposition}[Computational Hardness]\label{prop:hard}
    Deciding $\mathrm{ES}(s)=0$ is \textsf{NP}-complete, even at $d=1$ with uniform masses, integer points and $s=N/2$. Unless \textsf{P}$=$\textsf{NP}, no polynomial-time estimate of $\mathrm{ES}(s)$ within any multiplicative factor exists, and none within additive error $<N^{-2}$ on the reduction class (App.~\ref{app:proofs}).
\end{proposition}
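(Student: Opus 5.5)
We reduce from \textsf{Balanced Partition}, also called equal-cardinality \textsf{Partition}: given positive integers $a_1,\dots,a_{2k}$, decide whether some $k$ of them sum to half the total. This problem is \textsf{NP}-complete by the standard reduction from \textsf{Partition}, which pads the instance with zeros and shifts every entry by a large constant. Membership in \textsf{NP} is immediate. A kept set $A$ of size $s$ is a certificate, and $\mathrm{es}(A)=0$ can be checked exactly in rational arithmetic, using Lemma~\ref{lem:identities} in the form $g_A=0$.

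For hardness, take $N=2k$, $d=1$ and uniform masses $p_j=1/N$. Let $T=\sum_j a_j$ and set the points to $v_j=N a_j - T$. These are integers with mean $\mu=0$. By Lemma~\ref{lem:identities}, for $|A|=s=N/2$ we get $\mathrm{es}(A)=|g_A|/p(A)=\tfrac{2}{N}\bigl|\sum_{j\in A} v_j\bigr|$. Moreover $\sum_{j\in A}v_j = N\sum_{j\in A}a_j - kT$, which vanishes exactly when $\sum_{j\in A}a_j=T/2$. Hence $\mathrm{ES}(N/2)=0$ if and only if the balanced partition instance is a yes-instance. This proves \textsf{NP}-completeness under the stated restrictions.

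The multiplicative inapproximability follows directly. Any polynomial-time estimate $\widehat{\mathrm{ES}}$ with $\mathrm{ES}\le\widehat{\mathrm{ES}}\le c\,\mathrm{ES}$, for any factor $c$ (even one depending on $N$), returns $0$ exactly when $\mathrm{ES}(s)=0$. It would therefore decide the \textsf{NP}-complete problem above.

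The additive statement needs a gap bound on the reduction class, and this is the step I expect to require the most care. In a no-instance every $\sum_{j\in A}v_j$ is a nonzero integer, so $\mathrm{ES}(N/2)\ge 2/N$. To reach the stated threshold $N^{-2}$, I would choose the scaling so that the no-instance gap is at least $2N^{-2}$. One option is to keep the unscaled integers $v_j=2a_j-\bar{a}$ after doubling to clear the denominator, which again yields a gap of order $1/N$. Another is simply to check that the gap $\mathrm{ES}\ge 2/N$ exceeds $2N^{-2}$ for $N\ge 2$. An estimator with additive error strictly below $N^{-2}$ then distinguishes yes-instances ($\mathrm{ES}=0$, estimate $<N^{-2}$) from no-instances ($\mathrm{ES}\ge 2/N\ge 2N^{-2}$, estimate $>N^{-2}$) by thresholding at $N^{-2}$. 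This completes the reduction. The only delicate points are ensuring that the points remain integers after centering, which is why I multiply by $N$, and confirming that the integrality gap survives the division by $p(A)=1/2$ in the identity of Lemma~\ref{lem:identities}.
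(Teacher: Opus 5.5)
Your proof is correct and follows the paper's route: a reduction from balanced \textsc{Partition} to a $d=1$, uniform-mass, $s=N/2$ instance using $\es(A)=2\abs{g_A}$ at $p(A)=\tfrac12$, zero-versus-nonzero for the multiplicative claim, and an integrality gap with a threshold at $N^{-2}$ for the additive one. The differences are cosmetic. You cite the hardness of balanced \textsc{Partition} instead of deriving it by zero-padding, and you center and scale the points to $v_j=Na_j-T$, which gives a no-instance gap of $2/N$ (in fact at least $1$, since $\sum_{j\in A}v_j=k(2\sum_{j\in A}a_j-T)$) rather than the paper's $2N^{-2}$. The aside proposing $v_j=2a_j-\bar a$ is garbled, since those points are neither centered nor integral in general, and it should simply be dropped.
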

The objective is not submodular (Ex.~\ref{ex:notsub}), and $\mathrm{ES}(s)$ is a nonconvex combinatorial minimization, so we enumerate over a restricted candidate set and use a cheap subset whose error upper-bounds $\mathrm{ES}(s)$ at any budget. A selector\label{def:bal} maps an instance and a budget to a size-$s$ kept set (top-mass is the deployed one) and the \textit{balancing selector} is the greedy alternative that starts at $A=A_s^\star$ and repeatedly takes the swap of one kept key for one dropped key that most decreases $\mathrm{es}(A)$, stopping when no decrease exists. Balancing reads the dense output $\mu$, so it runs at decode time (where the value-aware criteria of \citep{caote,vatp} also apply) or as a prefill teacher. Note that we use it only to measure how much of the gap a better kept set can still close---ContourKV itself does not use it (\S\ref{sec:campaign}).

Even though each query row is its own instance, deployment fixes one kept set per KV head and applies it to the \emph{group} of query rows that share that head and to all future rows. Since every row could have kept the shared set itself, the best per-row subset can only do better, so the per-row optimum lower-bounds every shared rule's error. One group vote, which keeps the $s$ keys of largest mass summed over the group's rows, costs each row a median $1.06\times$ to $1.22\times$ its own top-mass error at $s=8$, and the cost rises monotonically with group size: $1.06\times$, $1.12\times$, $1.17\times$, and $1.22\times$ at sizes $2$, $4$, $5$, and $6$. Against each row's own $\ES_{\mathrm{pool}}$ the medians run $1.14\times$ to $1.38\times$. The vote degrades $76$ to $90\%$ of rows against their own top-mass sets and improves $7$ to $10\%$, where the improved rows are those whose own top-mass keeps a key that the group vote drops. The multi-head family has group size $1$, so the vote reproduces each row's own top-mass selection exactly.
\subsection{Enumeration}
\begin{definition}\label{def:pool}
\looseness=-1 The \textit{candidate set} of an instance is the top-mass set together with the keys of largest single-key error $p_j\Vert v_j-\mu\Vert$, and $\mathrm{ES}_{\mathrm{pool}}(s)$ is the exact minimum of $\mathrm{es}$ over the size-$s$ subsets of the candidate set, so $\mathrm{ES}_{\mathrm{pool}}\geq \mathrm{ES}(s)$ and the derived estimates are lower bounds. Write $\hat\kappa=\mathrm{es}(A_s^\star)/\mathrm{ES}_{\mathrm{pool}}$, and write $c=1-\mathrm{ES}(s)/\mathrm{es}(A_s^\star)$ for the fraction of the gap that a subset can close, with estimate $\hat c$ using $\mathrm{ES}_{\mathrm{pool}}$. Let $A_{\mathrm{bal}}$ be the set that the balancing selector returns. A (layer, head) pair, called a \textit{cell}, is \textit{flat} when its median dropped mass at $s=8$ exceeds $0.25$.
\end{definition}

\begin{figure}[!ht]\centering
\includegraphics[width=1\textwidth]{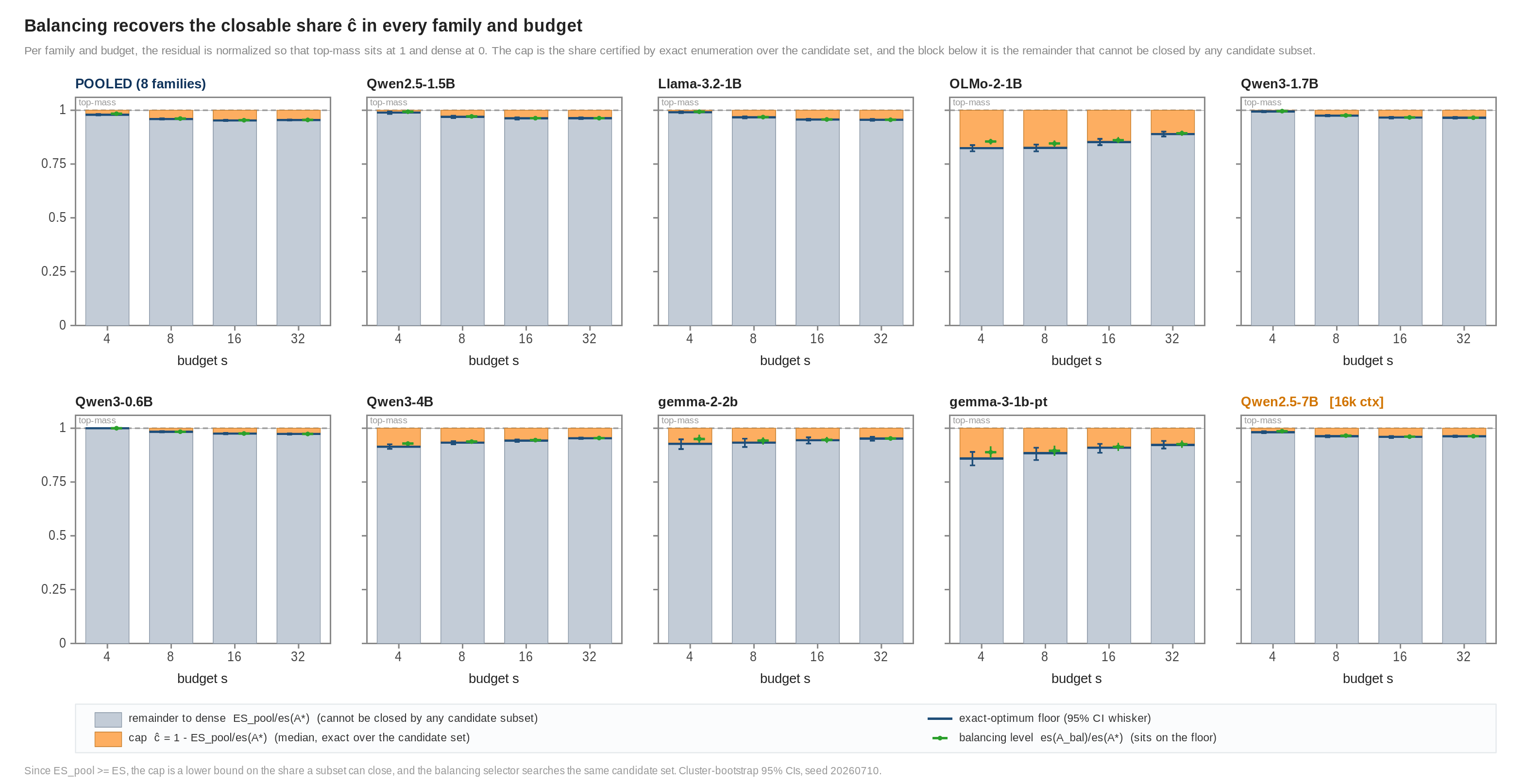}
\caption{\textbf{Ceiling across 10 arms.} Per family and budget, the reconstruction residual is normalized so that top-mass sits at $1$ and dense at $0$. The block is the remainder $\ES_{\mathrm{pool}}/\es(\mass_s)$ that cannot be closed by any candidate subset, and the cap above it is the share $\hat c$ that a subset can close. The mark is the balancing selector's residual, which sits at the floor in every cell. Since $\ES_{\mathrm{pool}}\ge\ES$, the cap is a lower bound. Whiskers are cluster-bootstrap $95\%$ intervals.}
\label{fig:ceiling}
\end{figure}

Here, an instance is a sampled query row of a frozen attention head. Four families (Qwen2.5-1.5B, our reference family, Llama-3.2-1B, OLMo-2-1B, Qwen3-1.7B), with six rows sampled per head at every layer over three documents, give $111{,}744$ instances at $s\in\{4,8,16,32\}$. We also ran a 7B arm at $16{,}384$ tokens (adding $56{,}448$), and test five further arms under the same protocol. This brings the panel to ten models over five lineages and $0.6$B to $14$B. We compute $\mathrm{ES}_{\mathrm{pool}}$ by exhaustive enumeration and certify it with cap-widening sweeps and an exact solver, which leave the median $\hat c$ under $0.12$ at every budget (protocol and certification in App.~\ref{app:protocol}). A $25\times$ larger enumeration cap at the worst budget case $s=32$, from $2\cdot10^{5}$ to $5\cdot10^{6}$ subsets and four to six keys beyond $\mass_s$, moves the median gap by under $1\%$, and on the depth-spanning head subset of that sweep the fraction with $\hat\kappa>1.11$ rises from $35.6\%$ to $37.8\%$ and then to $38.7\%$, with diminishing increments. A solver confirmed the $\ES_{\mathrm{pool}}$ argmin globally optimal on $37\%$ of solved instances and found a materially better subset on $4.9\%$ (every improvement tightens $\ES$).

\paragraph{Penalty.} We discover that the cost of forced weights grows with the budget. The ratio $\pi(s)=\ES(s)/\EF(s)\ge1$ analyzes forced weights against the free-weight optimum. On the reference family, the median $\pi$ rises from $1.07$ to $1.79$ and its $99$th percentile from $2.2$ to $6.8$ as $s$ rises from $4$ to $32$ (App.~\ref{app:penalty}). Once the weights are renormalized, geometric selection also does worse than top-mass (Table~\ref{app:tab:perbudget}).

\paragraph{Selection ceiling.}\label{sec:ceiling} We find a median $\hat c$ of $0.021$ to $0.047$ over eight families and all budgets, with per-budget $95\%$ intervals inside $[0.017,0.051]$, rising to $0.200$ to $0.254$ on the instances with $\hat\kappa>1.11$. The share is measured against the top-mass-to-dense gap $\es(\mass_s)$, since we recover $\mu$ when every key is kept, and it is small because large gaps are uncommon ($\hat\kappa\le1.5$ on $89$ to $90\%$ of instances, $\hat\kappa\le2$ on $97\%$ at $s\in\{4,8\}$, App.~\ref{app:protocol}). Top-mass attains $\ES_{\mathrm{pool}}$ outright on $39\%$ of instances at $s=4$ and $21\%$ at $s=8$. We show that the balancing selector reaches this bound, recovering a median $1.00$ of the closable gap in every family and budget ($[1.00,1.00]$, Figure~\ref{fig:ceiling}). Per-family coverage at $\hat\kappa\le1.5$ and $s=4$ spans $0.75$ (OLMo-2) to $0.95$ (Qwen3-0.6B), and every family clears $0.91$ at $\hat\kappa\le2$. Where covered, the median $\hat\kappa$ is $1.00$ to $1.03$ and the $90$th percentile is $1.14$ to $1.28$; since $\hat\kappa$ is restricted to the candidate set, these are upper estimates. We cannot search exactly at the deployed $s=64$ to $256$ (Prop.~\ref{prop:hard}), but no selector gains more than $\es(\mass_s)\le2D\bar m_s$ (Lemma~\ref{lem:identities}), and this cap falls to a median $28$ to $34\%$ of its $s=8$ value at $s=64$ and $8$ to $12\%$ at $s=256$. The balancing subset stays feasible and closes $0.04$ to $0.13$ of the gap across $s=64$ to $256$ on the two audited families, at or above its $s=32$ level, so this share lower-bounds $c$ (Table~\ref{app:tab:anchor}, App.~\ref{app:cert}). Thus $\es(\mass_s)$ decreases with $s$ while the closable fraction stays level.

\begin{table}[!ht]\centering\footnotesize
\renewcommand{\arraystretch}{1.05}
\begin{tabular}{l | c c c c | c c c}
median balancing closure & $s=4$ & $8$ & $16$ & $32$ & $64$ & $128$ & $256$ \\
\hline
Qwen2.5-1.5B   & $0.001$ & $0.023$ & $0.032$ & $0.032$ & $0.044$ & $0.047$ & $0.041$ \\
OLMo-2-1B      & $0.134$ & $0.144$ & $0.124$ & $0.091$ & $0.126$ & $0.127$ & $0.105$ \\
\hline
pooled         & $0.032$ & $0.057$ & $0.058$ & $0.050$ & $\mathbf{0.075}$ & $\mathbf{0.077}$ & $\mathbf{0.061}$ \\
median dropped mass & $0.38$ & $0.28$ & $0.20$ & $0.13$ & $0.08$ & $0.04$ & $0.02$ \\
gap, share of its $s=8$ value & --- & $1$ & --- & --- & $0.28/0.34$ & $0.16/0.21$ & $0.08/0.12$ \\
\end{tabular}
\caption{Native budget: the median share of the top-mass-to-dense gap
closed by the balancing subset, which is a lower bound on $c$, in one
convention across all budgets (the vertical rule marks the enumeration boundary).
The share holds at its small-budget level through $s=256$ while the gap it
applies to, capped by the dropped mass (last row, pooled), collapses. Balancing
beats top-mass by at least $10\%$ on $21$ to $30\%$ of Qwen2.5-1.5B rows and
$47$ to $59\%$ of OLMo-2 rows across budgets. The last row gives the remaining
gap at the deployed budgets as a share of its $s=8$ value on the native-budget
records of the two audited families (Qwen2.5-1.5B/OLMo-2-1B, \S\ref{sec:audit}).}
\label{app:tab:anchor}
\end{table}

\subsection{Prediction}
\begin{figure}[!ht]\centering
\includegraphics[width=0.99\textwidth]{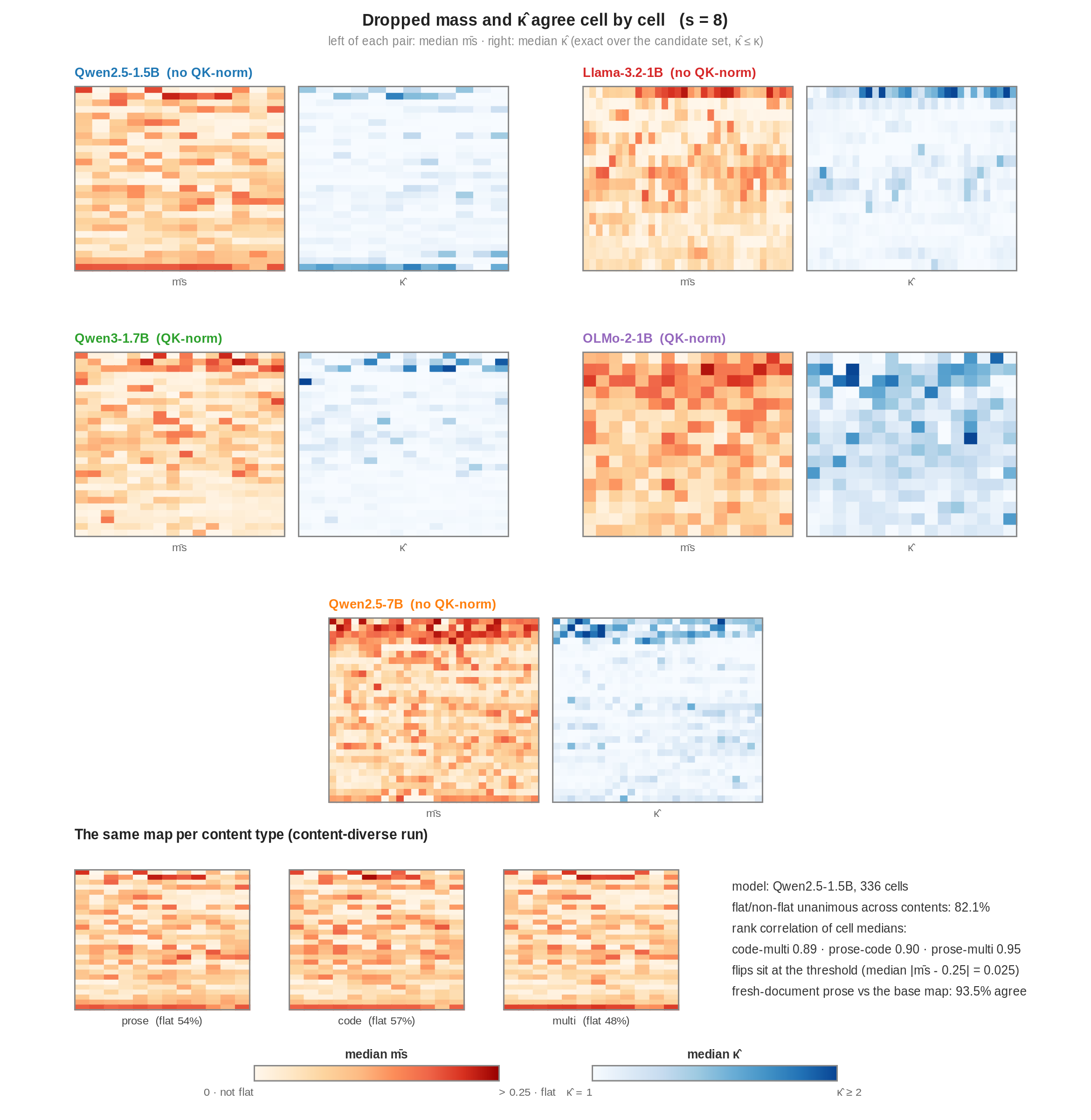}
\caption{\textbf{The map.} Per cell, the left panel shows the median dropped mass at $s=8$ and the right panel shows the median $\hat\kappa$ from the same records. Cell-level AUC is $0.82$ to $0.94$, and balancing closes the cells whose median $\hat\kappa$ reaches $1.11$. Since $\hat\kappa\le\kappa$, those cells are certain.}
\label{fig:map}
\end{figure}

\paragraph{Dropped mass certificate.}\label{sec:certificate} We find that $\bar m_s$ best predicts whether an instance has $\es(\mass_s)>0$ and $\hat\kappa>1.11$, at AUC $0.76$ to $0.85$ on the four headline families and $0.76$ to $0.89$ across the ten arms. We tested an $O(Nd^2)$ spectral alternative; it does not beat $\bar m_s$ on any arm. The published absolute bounds rank $\es(\mass_s)$ at Spearman up to $0.98$ but fall to $0.21$ to $0.70$ against $\hat\kappa$ (\citep{tvtopk}). The prediction replicates across content types and retrieval contexts (Table~\ref{app:tab:content}, App.~\ref{app:perheadtables}). We find that $\Phi=\Vert g_{A^\star_s}\Vert$ of Lemma~\ref{lem:identities} is a one-pass $O(Nd)$ statistic and tracks $\ES_{\mathrm{pool}}$ at Spearman $0.84$ to $0.98$ across budgets on the reference family and $0.80$ to $0.98$ pooled per family (Tables~\ref{app:tab:perbudget} and~\ref{tab:certificate}). The mechanism follows from \S\ref{sec:operator}, since a row with small $\bar m_s$ caps the selector's gain at $2D\bar m_s$ while a row with large $\bar m_s$ leaves terms $p_j(v_j-\mu)$ that a swap cancels. Per cell, $\bar m_s$ gives the map of Figure~\ref{fig:map}, and flat-cell location is architectural, set by query-key normalization and windowing (per-family edge-to-interior ratios in Table~\ref{tab:certificate}), so we deploy $\bar m_s$, since a hard-coded edge rule fails on the query-key normalized and windowed models. On an H100, computing $\bar m_s$ is an $O(s)$ increment on work that top-mass selection already does, at $0.002\times$ to $0.03\times$ a dense attention layer across $N=2{,}048$ to $16{,}384$ ($0.12$\,ms per layer at $N=16{,}384$).

\begin{table}[!ht]\centering\small
\renewcommand{\arraystretch}{1.05}
\begin{tabular}{l | c | c}
content type & $\mathrm{AUC}(\bar m_s)$ span & share \\
\hline
English prose  & $0.78$ to $0.85$ & $26$ to $30\%$ \\
source code    & floor of the span, $0.71$ & $33$ to $37\%$ \\
French text    & $0.81$ to $0.86$ & $20$ to $23\%$ \\
\end{tabular}
\caption{Content diversity on the reference family ($72{,}576$ instances). ``share''
is the fraction of instances with $\hat\kappa>1.11$, and the AUC replicates across
all three content types. The headroom itself is
content-dependent, and code carries the most (median $\kappa$ to $1.06$), which is
consistent with flatter attention leaving a larger tail. Input-level $95\%$ intervals are about
$\pm5$ percentage points on the share and $\pm0.01$ to $\pm0.035$ on
the AUC.}
\label{app:tab:content}
\end{table}

\paragraph{Task-structured contexts.} We registered every bound before any capture, and all of them hold on contexts built to concentrate retrieval. The pooled $\hat c$ stays at $0.016$ to $0.042$ per budget (registered ceiling $0.09$), and the $90$th percentile headroom ratio of retrieval-critical rows against background is $0.83$ to $0.97$ (ceiling $2$). The case that needs value-aware selection occurs on $1.87\%$ of retrieval rows at $s=8$ (ceiling $5\%$), and the prediction transfers at AUC $0.75$ to $0.80$ (floor $0.70$). We test on planted needles \citep{passkey}, multi-hop variable chains, and long-document QA, assembled in token space so that needle positions are exact, with retrieval rows at least $20\%$ of instances. The map's cell labels (flat and non-flat) hold across these contexts (label agreement $0.90$ to $0.93$, order correlation $+0.94$ to $+0.97$).

\subsection{Audit}
\paragraph{Deployed selectors and loss.}\label{sec:audit}\label{sec:teeth} The deployed selectors SnapKV, H2O, Quest and StreamingLLM sit well above the optimum at small budgets, at a median $1.4$ to $2.1$ times $\mathrm{ES}_{\mathrm{pool}}$ at $s=8$ against $1.02$ to $1.03$ for top-mass on the three sharper families, while TOVA's keep set is top-mass itself (\cite{snapkv,h2o,tova,quest,streamingllm}). SnapKV's error holds at $1.8$ to $2.2$ times the same-budget top-mass error through the deployed budgets; $\bar m_s$ ranks each rule's excess at Spearman $0.70$ to $0.84$ without query-key normalization (App.~\ref{app:audittables}).

Switching only the flat cells to balancing lowers held-out continuation cross-entropy at $s=8$ by $0.122$ to $0.243$ nats on five of the eight families (four lineages, $1$B to $7$B, every interval below zero). We measure loss in the model forward at matched budget, where each head's attention is replaced with the restricted operator $m_A$, and the same code reproduces the unmodified model up to floating-point error. The switch captures $88$ to $98\%$ of what all-cell balancing achieves, and on Qwen2.5-7B at $16{,}384$ tokens the gap doubles while the same switch recovers $63\%$ of it. Keeping each rule on non-flat cells while switching the flat cells to balancing recovers $0.51$ to $0.93$ of its gap to all-cell balancing on all nine (selector, family) pairs, with each paired $95\%$ interval below zero. We discover that on the three Qwen3 scales, the map selects the wrong cells and keeps $55$/$15$/$31\%$ of the all-cell gain, so it supports cell-level conversion only, which must be measured per family. A rank-$16$ scorer trained on the query and key projections to reproduce the balancing swaps (\citep{oracleprefill}) recovers at most $5\%$ of the decode-time loss reduction against a $25\%$ bar fixed in advance, because the gain sits in the value-side residual $g_A$, which lies outside those projections (App.~\ref{app:scope}). Also, the constant that converts a reconstruction gain into a loss gain varies by four orders of magnitude across families; it is smallest where reconstruction gain is largest, and it is not predicted by dropped mass (App.~\ref{app:scope}).

\section{Related Work}\label{sec:related}
We claim that forced weights separate this problem from prior work. Approximating a mean by a few of its points is classical. Additionally, every near-optimal answer re-solves the weights on the points that it keeps. Renormalization removes that freedom, since it only rescales the original weights. Resampling keeps the largest weights, which is optimal only for value-agnostic unbiased objectives (\cite{varopt,fearnhead}). Optimal design relaxes the subset to a re-solved design measure (\cite{design}). Coresets and recombination reweight whatever they retain (\cite{feldman-langberg,recombination,caratheodory-approx}). The attention-discrepancy line applies vector balancing, with re-solved weights in \cite{balancekv} and one uniform rescaling of the kept weights in \cite{attncoresets}. None of them answers which subset is best when the weights are fixed.

In the eviction literature the objects differ in another way. Value-aware scores rank one decision at decode time and do not certify the set that remains (\cite{caote,vatp,criticalkv}), and leverage selection builds sets that already contain the high-mass keys (\cite{levattention}). Retained mass is deployed practice, which stops a budget (\cite{twilight}) and allocates one across heads (\cite{adakv}) without stating how far the resulting set sits from the optimum. We supply that statement. The nearest concurrent diagnostic works per block, so it certifies nothing per head (\cite{zhang2026doesvalueawarekveviction}).

On the systems side, ragged paging is being built for the storage constraint that we measure (\cite{tangram}), and the nominal-to-physical memory gap is posed as an infrastructure problem (\cite{nvidiainfra}), which we measure per method class. The deployed compressors are audited, in particular the reconstruction-scored class (\cite{kvzip}), the observation-window class (\cite{snapkv}), and the strongest budget-enforcing baseline (\cite{compactor}). ContourKV reuses their importance scores with only the allocator replaced.

\section{Deployment}
\label{sec:campaign}
\paragraph{ContourKV.}\label{sec:ContourKV} \looseness=-1 The allocator uses one importance score $r$ and one threshold $\tau$ shared by every layer and head, so the budget flows to the cells where dropped mass predicts that selection can still improve (\S\ref{sec:certificate}). Each cell keeps a recency floor $F$ of the last $32$ positions plus $\{j \notin F : r_j \geq \tau\}$, where $\tau$ is set so that the cache averages $s$ entries per cell, so flat cells retain more. We instantiate $r$ with KVzip's context-reconstruction score (\cite{kvzip}), so the two methods differ only in the allocation, and with SnapKV's observation-window score (\cite{snapkv}), which needs no second pass. Both versions are training-free.

\paragraph{Setup and measured memory.}\label{sec:bytes} We run Llama-3.1-8B-Instruct, Qwen2.5-7B-Instruct, Qwen3-8B and Qwen3-14B at $4{,}096$ to $32{,}768$ tokens on RULER (\cite{ruler}) and LongBench (\cite{longbench}), through NVIDIA's official kvpress pipeline and its method implementations whenever they exist (\cite{kvpress}). The fourteen library baselines span nine uniform-selection and five per-head methods, with H2O added in our harness, all at the same per-head budget $s$ (roster below). \looseness=-1 In the needle runs, the nine uniform-selection methods shrink the cache to $0.24$ to $4.3\%$ of full size, while the five per-head methods, including KVzip, hold all of it at every length and budget, because the pipeline stores each layer's cache as one tensor with equal slots per head and applies the selection as a mask (their own implementations free it with ragged per-head storage, \cite{tangram}). ContourKV evicts physically in both modes and holds $0.15$ to $5.3\%$ of the cache enforced ($0.6$ to $1.5$ times SnapKV's bytes) and $4.0$ to $96\%$ without. Since the benchmark setting records no bytes of its own, we replayed its contexts through the same code at the same commit (prefill only, $603$ measurements per model), without touching a score. On the replayed benchmark documents, SnapKV and Compactor hold $0.09$ to $3.6\%$ of the full cache while KVzip holds all of it at all $56$ conditions (ratios in App.~\ref{app:campaign}), and enforcement caps every ContourKV head at SnapKV's count.

\paragraph{Compared methods.}
\begin{center}\footnotesize
\setlength{\tabcolsep}{5pt}
\renewcommand{\arraystretch}{1.1}
\fitwidth{\begin{tabular}{p{0.46\linewidth} | c | c | c}
& selection & runs in & frees memory \\
\hline
SnapKV, PyramidKV, TOVA, StreamingLLM, KNorm, KeyDiff, CriticalKV, Compactor, KVzap & uniform & library & yes \\
H2O & uniform & our harness & yes \\
ContourKV, budget enforced (benchmark setting) & uniform & library & yes \\
KVzip, FastKVzip, Ada-KV, ExpectedAttention, DuoAttention & per head & library & no \\
ContourKV, budget not enforced & per head & library & over-keeps \\
ContourKV, budget enforced (retrieval setting) & per head & our harness & yes \\
\end{tabular}}
\end{center}
The compared methods are SnapKV (\cite{snapkv}), PyramidKV (\cite{pyramidkv}), TOVA (\cite{tova}), StreamingLLM (\cite{streamingllm}), CriticalKV (\cite{criticalkv}), Compactor (\cite{compactor}), KVzip (\cite{kvzip}), Ada-KV (\cite{adakv}), ExpectedAttention (\cite{kvpress}), DuoAttention (\cite{duoattention}), and four methods shipped with the library, KNorm, KeyDiff, KVzap and FastKVzip. Selection is uniform when every head in a layer keeps the same number of entries, and it is per head when the counts differ. Our harness deletes per head instead, and in the benchmark setting enforcement caps each head before scoring. The unenforced row frees memory but over-keeps to each layer's widest cell, so only the two enforced rows free it fully.

KNorm prunes by key norm, KeyDiff evicts the keys closest to the layer's mean key direction, and KVzap is a learned per-layer scorer shipped with the library. CompressKV is evaluated separately (\S\ref{sec:square}). We configured every method to keep the same fraction of the cache, $0.1$ to $3.4\%$ across the grid, so the methods differ only in which keys they keep. We exclude IndexMem (\cite{indexmem}), which releases neither code nor weights, and DBTrimKV (\cite{dbtrimkv}), whose retention gates require training and whose released checkpoints do not cover any of our four models.

\begin{table}[tbp]\centering\small
\setlength{\tabcolsep}{6pt}
\renewcommand{\arraystretch}{1.05}
\fitwidth{\begin{tabular}{l | c | c | c}
Opponent & Holds & Not enforced (W/T/L) & Enforced (W/T/L) \\
\hline
KVzip & full cache & $\mathbf{146/9/2}$ & $\mathbf{93/45/22}$ \\
ExpectedAttention & full cache & $142/15/0$ & $61/80/19$ \\
FastKVzip & full cache & $74/3/0$ & $57/23/0$ \\
SnapKV & budget & $137/19/1$ & $56/91/13$ \\
Compactor & budget & $136/20/1$ & $\mathbf{24/96/40}$ \\
KVzap & budget & $76/2/1$ & $38/28/14$ \\
\end{tabular}}
\caption{Benchmark record: win/tie/loss for ContourKV against each opponent over paired conditions, without and with its budget enforced.}
\label{app:tab:wtl}
\end{table}

\paragraph{Benchmark results.}\label{sec:wrapped} \looseness=-1 Without enforcement (the reconstruction score), ContourKV beats KVzip in $146$ of $157$ paired conditions and loses $2$, with LongBench margins of $18.1$ to $30.3$ points, while holding $3.5$ to $84\%$ of the cache (full W/T/L grid in Table~\ref{app:tab:wtl}). Enforcing the budget on one fixed selection, with the ranking unchanged, lowers the score in $140$ of $157$ paired conditions and does not raise it in any---by $14.1$ points on average on LongBench $s=32$, up to $62.2$ on RULER $s=128$. With the budget enforced, ContourKV still beats KVzip in $93$ of $160$ conditions and loses $22$ at the budget-enforcing baselines' byte count (sixteen at RULER $s=32$, where KVzip's full cache leads every budget-enforcing method). The advantage against SnapKV sits at RULER $s=128$ ($+10.8$ points). We note that this comes from the score, since the window-scored version ties SnapKV at equal memory and the reconstruction score costs $8.7$ to $13.6$ times a dense prefill (App.~\ref{app:campaign}).

The Compactor tie ($24/96/40$) is predicted by \S\ref{sec:ceiling}, since both sit near the subset optimum and selection closes only a few percent of the remaining gap. We can confirm this by reallocating the same total memory across cells by the window score\label{sec:square} ($4/133/17$, $0.13$ points below SnapKV over $154$ conditions). The reconstruction-scored variant was withheld when its answer-span coverage collapsed at matched memory (App.~\ref{app:campaign}).

CompressKV (\cite{compresskv2}) is the one extreme-compression challenger with released code. Our port reproduces the official implementation's selection bitwise on all $328$ test cases, and the one unavoidable question-time code-path difference moves bfloat16 scores by at most $1.22\times10^{-4}$. It matches SnapKV at matched memory ($+0.89$, $60$ of $80$ tied) at $1.03$ to $1.13$ times a dense prefill. On RULER $s=128$ it trails enforced ContourKV by $9.39$ points, in a query-agnostic setting that is not measured by its published scores (App.~\ref{app:campaign}).

\begin{table}[tbp]\centering\small
\renewcommand{\arraystretch}{1.05}
\fitwidth{\begin{tabular}{l | c | l | c}
Opponent & W/T/L & Opponent & W/T/L \\
\hline
SnapKV & $\mathbf{23/23/2}$ & KVzip & $27/21/0$ \\
PyramidKV & $23/23/2$ & Ada-KV & $27/17/4$ \\
TOVA & $25/23/0$ & ExpectedAttention & $27/21/0$ \\
StreamingLLM & $23/24/1$ & H2O & $12/6/0$ ($18$) \\
KNorm & $27/21/0$ & KVzap & $11/8/0$ ($19$) \\
KeyDiff & $21/26/1$ & FastKVzip & $8/16/2$ ($26$) \\
CriticalKV & $23/24/1$ & DuoAttention & $9/2/2$ ($13$) \\
Compactor & $24/22/2$ & & \\
\end{tabular}}
\caption{Retrieval: win/tie/loss for the window-scored ContourKV with its budget enforced against each compared method over document-paired conditions, with $48$ per opponent unless noted in parentheses.}
\label{app:tab:needle}
\end{table}

\begin{table}[tbp]\centering\small
\setlength{\tabcolsep}{6pt}
\renewcommand{\arraystretch}{1.1}
\fitwidth{\begin{tabular}{l | c | c}
Re-run & Change in hit rate & Memory \\
\hline
ranking restricted to document rows & $\mathbf{-88.2}$ $[-90.0,-86.4]$ & $1.02\times$ \\
eviction moved to the document boundary & $\mathbf{-87.6}$ $[-89.0,-86.3]$ & $1.43\times$ \\
difference of the two rows & $+0.6$ & \\
\end{tabular}}
\caption{The condition decomposition of \S\ref{sec:needle}, on eight document-paired Llama-3.1-8B cells at $s\in\{64,128\}$. Changes are hit-rate points against the unmodified run, and memory is relative to the run compressed before the question.}
\label{app:tab:condition}
\end{table}

\paragraph{Retrieval.}\label{sec:needle} Separation appears from $s=64$ and is widest at $s=128$ on three of four models, at $0.37$ to $1.00$ against SnapKV's $0.00$ to $0.11$. At $s=32$ neither method recovers the planted fact; the fourth model, Qwen3-14B, fails on both for reasons specific to that lineage (per-model records in App.~\ref{app:campaign}). Over the $48$ conditions the tally against SnapKV is $23/23/2$, and the ordering holds against the other fourteen methods at comparable bytes ($0.82$ times SnapKV's at $s=64$ and $1.17$ at $s=128$, Table~\ref{app:tab:needle}). The kept set is chosen without reading the question, while the ranking behind it runs over question rows, because the harness evicts after the question enters the cache. We re-ran eight matched Llama-3.1-8B conditions to price that: restricting the ranking to document rows costs $88.2$ points of hit rate and moving eviction to the document boundary costs $87.6$, which is the same to within $0.6$ points, and the allocation contributes none of it (Table~\ref{app:tab:condition}). Although it holds $1.43$ times the memory, the re-run that evicts at the document boundary finishes only $2.6$ points above the run compressed before the question, so the fall is not a memory effect. The re-run restricted to document rows does evict four to six of the question's thirty-nine rows, but that is far too few to account for an $88$-point drop. We report the condition rather than correct for it, and it arises only where decode-time eviction meets prefill compression, so the benchmark results are unaffected.

\paragraph{Discussion.} Because selection's ceiling is this low, comparisons between eviction methods need enforced budgets and measured memory. We have not yet shown that a rule that acts before the question arrives can reach the value-side residual.


\newpage
\bibliography{main}
\bibliographystyle{plainnat}


\appendix

\section{Proofs}\label{app:proofs}
In this appendix, our goal is to prove Lemma~\ref{lem:identities} and Proposition~\ref{prop:hard} (stated in
\S\ref{sec:operator}).
\subsection{Identities}\label{apx:sec:idts}
\begin{proof}[Proof of Lemma~\ref{lem:identities}]
Use the law of total expectation to write $\mu=p(A)m_A+\bar p(A)m_{A^c}$, so that $\mu-m_A=\bar p(A)(m_{A^c}-m_A)$. Taking norms yields the first identity. For the second form, we have \[m_A-\mu=(p(A))^{-1}\sum_{j\in A}p_j(v_j-\mu)=\frac{g_A}{p(A)}.\] We know that $p_j(v_j-\mu)$ sums to 0 across all $j$ since their expanded form is $\mu-\mu=0$. Additionally, by definition of convex combination, \[\Vert \mu-m_A\Vert\geq \mathrm{dist}(\mu,\mathrm{conv}\{v_j:j\in A\}),\] so we get $\mathrm{es}(A)\geq \mathrm{ef}(A)$. Minimizing does not change this relation. For the bound $\es(\mass_s)\leq 2D\bar m_s$, note that both $m_A$ and $m_{A^c}$ are convex combinations of $v_j$ and lie in $\bar B(\mu, D)$, so their distance is at most $2D$. Applying the first form at $\mass_s$ completes the bound.
\end{proof}
\begin{proof}[Remark]
    We can recover the per-decision eviction error of \citet{caote,criticalkv} using the single-key case. That is, for a single dropped key $j$ we have $m_{A^c}=v_j$, and we use the first form to solve for $\es(A)$.
\end{proof}
\begin{proposition}\label{apx:prop:benchmark}
    Working in $\ell_2$, we have \[\EF(s)\leq\sigma/\sqrt s\leq D/\sqrt s,\, \, \sigma^2=\sum_j p_j\Vert v_j-\mu\Vert^2.\]
    In the worst case, this inequality is tight with $\EF(d+1)=0$. The minimizers are extreme points of the hull that do not depend on the masses.
\end{proposition}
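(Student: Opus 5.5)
The plan is the probabilistic method (Maurey's empirical method). Draw $s$ indices $J_1,\dots,J_s$ i.i.d.\ from $p$ and form the empirical mean $\hat m=\frac1s\sum_{i=1}^s v_{J_i}$. Each $v_{J_i}$ has mean $\mu$, and the draws are independent, so
\[
\E\Vert\hat m-\mu\Vert^2=\frac1s\,\E\Vert v_{J}-\mu\Vert^2=\frac{\sigma^2}{s}.
\]
By Jensen, $\E\Vert\hat m-\mu\Vert\le\sigma/\sqrt s$, so some realization satisfies $\Vert\hat m-\mu\Vert\le\sigma/\sqrt s$. That $\hat m$ is a convex combination of at most $s$ distinct points $v_j$, namely those with indices in $\{J_1,\dots,J_s\}$. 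If there are fewer than $s$ of them, pad with arbitrary indices; this can only enlarge the hull. Hence $\EF(s)\le\sigma/\sqrt s$. Finally, $\sigma^2\le\sum_j p_jD^2=D^2$ by the definition of $D$, which gives the second inequality.

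For the second claim, Carathéodory's theorem gives $\EF(d+1)=0$: $\mu$ lies in $\mathrm{conv}\{v_j\}\subset\R^d$, so it is a convex combination of at most $d+1$ of the points. I would state the worst-case tightness through a concrete family. Take $v_j=e_j$, the standard basis vectors in $\R^N$ with $d=N$, and uniform masses, so $\mu=\frac1N\1$. The closest point to $\mu$ in the hull of any $s$ of the $e_j$ is their barycenter, and its distance to $\mu$ is
\[
\sqrt{\tfrac1s-\tfrac1N}=\sigma/\sqrt s\cdot\sqrt{N/(N-1)}\cdot\sqrt{1-s/N}.
\]
This matches the $1/\sqrt s$ rate up to constants for $s\ll N$, and it vanishes once $s$ reaches the Carathéodory threshold. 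This step is where I expect friction. The phrase ``tight'' has to be pinned to whichever sense the paper intends, either rate tightness on this simplex family or the exact statement $\EF(d+1)=0$, and I would prove both.

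For the minimizers, observe that $\mathrm{ef}(A)$ depends on $p$ only through $\mu$, so the minimizing sets are determined by the geometry of the point cloud. I would argue the extreme-point claim as follows. If $v_j$ is not an extreme point of $\mathrm{conv}\{v_k\}$, write it as a convex combination of extreme points. By Carathéodory applied inside the hull of the kept set together with those extreme points, a kept non-extreme point can be exchanged for extreme points without increasing the distance to $\mu$. This requires care with the cardinality budget: replacing one point may require several extreme points. I would therefore state the claim as ``there exist minimizers using extreme points'' and obtain it with a face-reduction argument. Let $y^\star$ be the nearest point to $\mu$ in the hull of an optimal kept set. It lies in some face of that hull, and by Carathéodory within that face it is spanned by at most $\dim+1$ vertices of the kept-set hull. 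The remaining obstacle is that vertices of the kept-set hull need not be extreme points of the full hull. I expect this to be the main difficulty, and I would resolve it either by restricting the claim to the worst-case (simplex) instances, where every point is extreme, or by weakening the claim to mass-independence.
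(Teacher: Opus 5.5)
\textbf{The inequality.} Your proof of $\EF(s)\le\sigma/\sqrt s\le D/\sqrt s$ is the paper's argument. You sample $s$ i.i.d.\ indices, use $\E\Vert\hat m-\mu\Vert^2=\sigma^2/s$, keep a good realization, pad to size $s$, and invoke Carath\'eodory for $\EF(d+1)=0$. Drawing $J\sim p$ is what makes the variance exactly the $\sigma^2$ of the statement, so the paper's opening appeal to ``exact Carath\'eodory'' is not needed. Your simplex family makes explicit the tightness that the paper only cites as classical: there $\EF(s)=\sqrt{1/s-1/N}=\sqrt{(N-s)/(N-1)}\,\sigma/\sqrt s$.

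\textbf{The last sentence.} The paper's proof never addresses it, and no argument will close it, because it is false as stated. Take $d=1$, $v=(-1,0,1)$, $p=(\tfrac14,\tfrac12,\tfrac14)$, $s=1$. Then $\mu=0$, and $\EF(1)=0$ is attained only by $\{2\}$, whose point $v_2=0$ is not an extreme point of $\mathrm{conv}\{v_j\}=[-1,1]$. Changing to $p=(0.1,0.1,0.8)$ moves $\mu$ to $0.7$ and the unique minimizer to $\{3\}$.

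So your exchange and face-reduction route cannot succeed, and neither fallback rescues the wording. Mass-independence fails on the same example. On simplex instances, where extremality is vacuous, a direct computation with $v_j=e_j$ and general $p$ gives
\[
\mathrm{ef}(A)^2=\sum_{j\notin A}p_j^2+\frac{\bar p(A)^2}{s}.
\]
Any swap of a kept key for a heavier dropped key strictly lowers this, so the minimizer is the top-mass set and depends on $p$ there as well.

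Two statements are true. First, as you observed, $\mathrm{ef}(A)$ depends on $p$ only through $\mu$, unlike $\es(A)$, which depends on each $p_j$ with $j\in A$. Second, for $d+1\le s\le N$, some minimizer consists of at most $d+1$ extreme points whose hull contains $\mu$, padded arbitrarily to size $s$. This follows from Carath\'eodory applied to the extreme points, whose hull is the full hull. State the claim in that form rather than trying to prove the literal sentence.
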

\begin{proof}
By exact Carath\'eodory \citep{rockafellar}, $\mu=\E[X]$ for a random $X$ supported on the points. Averaging $s$ independent copies gives $\E\norm{\mu-\bar X_s}^2=\sigma^2/s\le D^2/s$, so at least one of the $s$ copies has an average within $\sigma/\sqrt s$ of $\mu$, using at most $s$ distinct points. Padding the set to exactly $s$ points only expands the hull, so $\EF(s)\le\sigma/\sqrt s$. Tightness is classical \citep{caratheodory-approx,vershynin}. Exactness at $s=d+1$ is Carath\'eodory.
\end{proof}
\begin{example}[Small $\bar m_s$ does not bound $\kappa$]\label{ex:smallmass}
A small dropped mass does not make the top-mass set near-optimal on a given instance, because $\kappa=\es(\mass_s)/\ES(s)$ is a ratio of two errors that can both be small, and the denominator can shrink faster. A dropped-mass prediction of $\kappa$ can
therefore only be statistical, which is what \S\ref{sec:certificate} measures.
\end{example}

\begin{example}[Not submodular]\label{ex:notsub}
Let $d=1$, $v=(0,1,2,3)$, and $p=(\tfrac14,\tfrac14,\tfrac14,\tfrac14)$, so $\mu=\tfrac32$, and write kept sets by their points. Adding the key at $0$ to $\{2\}$ leaves $\es$ at $\tfrac12$ while adding it to the superset $\{1,2\}$ raises $\es$ from $0$ to $\tfrac12$, so the increment grows on the superset and $\es$ is not submodular. Adding the same key to $\{1\}$ raises $\es$ from $\tfrac12$ to $1$ while adding it to the superset $\{1,3\}$ lowers $\es$ from $\tfrac12$ to $\tfrac16$, so the increment shrinks and $\es$ is not supermodular.
\end{example}

\subsection{Hardness}\label{apx:sec:hard}
Our argument follows by a reduction from the \textsc{Partition} problem, which is \textsf{NP}-complete \citep{gareyjohnson}: given nonnegative integers $u_1,\dots,u_n$, decide whether some subset $S\subseteq[n]$ satisfies $\sum_{i\in S}u_i=\sum_{i\notin S}u_i$. We reduce in two steps. We first pass to the balanced variant, \textsc{BalancedPartition}: given an even $m$ and $w\in\mathbb Z_{\ge0}^{\,m}$, decide whether some $A$ with $\abs A=m/2$ splits the sum equally. We then encode \textsc{BalancedPartition} into \textsc{ZeroES}: given a rational instance of our problem, decide whether $\ES(s)=0$. Instances are rational throughout, with $p_j\in\mathbb Q\cap(0,1)$ and $v_j\in\mathbb Q^d$ in binary encoding, and by Lemma~\ref{lem:identities}, $\ES(s)=0$ holds exactly when some size-$s$ subset has $g_A=0$. We also use one elementary fact: over the rationals, 
\[
\sum_{i\in Y}x_i=\sum_{i\in X\setminus Y}x_i
\quad\Longleftrightarrow\quad
\sum_{i\in Y}x_i=\tfrac12\sum_{i\in X}x_i .
\tag{$\ddagger$}
\]

\begin{lemma}[Zero-padding encodes exact cardinality]\label{apx:lem:padding} Map a \textsc{Partition} instance $(n,u)$ to the \textsc{BalancedPartition} instance $(2n,w)$ with $w_i=u_i$ for $i\le n$ and $w_i=0$ otherwise. The map is polynomial-time, and $(n,u)$ admits an equal split exactly when $(2n,w)$ admits a balanced one.
\end{lemma}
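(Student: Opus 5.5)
The map is computed coordinate by coordinate. It copies the $n$ input integers and appends $n$ zeros, so it outputs $2n$ numbers whose binary encodings have total length linear in that of $(n,u)$. This makes it polynomial-time, and $m=2n$ is even as \textsc{BalancedPartition} requires. The substance is the equivalence, and I would prove the two directions separately. Throughout I would use $(\ddagger)$ to rewrite ``equal split'' as ``the chosen subset sums to half the total.'' The total is the same for $u$ and $w$, namely $T=\sum_{i\le n}u_i$, because the padding entries are zero.

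\emph{Balanced implies equal.} Suppose $A\subseteq[2n]$ with $\abs A=n$ satisfies $\sum_{i\in A}w_i=T/2$. Set $S=A\cap[n]$. The entries of $A$ outside $[n]$ are zeros, so $\sum_{i\in S}u_i=\sum_{i\in A}w_i=T/2$. By $(\ddagger)$, $S$ is an equal split of $(n,u)$. This direction is immediate.

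\emph{Equal implies balanced.} Suppose $S\subseteq[n]$ has $\sum_{i\in S}u_i=T/2$, and let $k=\abs S$, so $0\le k\le n$. I need to reach cardinality exactly $n$ without changing the sum. To do this I would add $n-k$ of the padding indices $\{n+1,\dots,2n\}$ to $S$. There are $n$ padding indices and $n-k\le n$, so enough are always available. Call the result $A=S\cup P$ with $\abs P=n-k$. Then $\abs A=n=m/2$ and $\sum_{i\in A}w_i=\sum_{i\in S}u_i+0=T/2$, so $(\ddagger)$ gives a balanced split.

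The only step that needs care is the counting in the second direction, namely that $n$ zeros always suffice to pad any $S$, including $S=\emptyset$, up to size $n$. Choosing $n$ padding zeros, rather than fewer, is exactly what guarantees this. I would also note the degenerate case $T$ odd: then neither instance has a split, and both directions hold vacuously because $T/2$ is not an integer sum. Together with $(\ddagger)$, this completes the lemma, and the lemma feeds into the encoding of \textsc{BalancedPartition} as \textsc{ZeroES} with uniform masses and $s=N/2$.
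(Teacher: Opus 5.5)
Your proposal is correct and matches the paper's proof: you pad $S$ with $n-k$ zero-weight indices (the paper takes $\{n+1,\dots,2n-k\}$) for one direction and intersect $A$ with $[n]$ for the other, invoking $(\ddagger)$ in both. Your explicit check that the map is polynomial-time and that $m=2n$ is even fills in a detail the paper's proof leaves implicit.
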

\begin{proof}
Write $U=\sum_{i\le n}u_i$. First suppose some $S\subseteq[n]$ splits the $u$-sum equally, with $\abs S=k$. By $(\ddagger)$ it has $\sum_{i\in S}u_i=U/2$, so $A=S\cup\{n+1,\dots,2n-k\}$ has $\abs A=n=m/2$ and $w$-sum $U/2$, and $(2n,w)$ admits a balanced equal split. Conversely, suppose some $A$ with $\abs A=n$ splits the $w$-sum equally. Then $\sum_{i\in A}w_i=U/2$, and since the padded coordinates carry weight zero, $S=A\cap[n]$ splits the $u$-sum equally by $(\ddagger)$. 
\end{proof}
\noindent Note that \textsc{BalancedPartition} lies in \textsf{NP}, since a verifier checks $\abs A=m/2$ and that $\sum_{i\in A}w_i$ equals half the total, so it is \textsf{NP}-complete.  Given a \textsc{BalancedPartition} instance $(m,w)$, we construct an instance of our problem in the following way: place the $m$ integers on the line, one key at $v_i=w_i$ with $d=1$, then give every key the same mass $1/m$ and set the budget to $s=m/2$. Formally,
\[
\sigma(m,w)=(N,d,p,v,s)=\big(m,\,1,\,(\tfrac1m,\dots,\tfrac1m),\,w,\,\tfrac m2\big),
\]
a polynomial-time map.

\begin{lemma}\label{apx:lem:uniform}
With $W=\sum_iw_i$, the instance $\sigma(m,w)$ has $\mu=W/m$ and, for every $A\subseteq[m]$,
\[g_A=\tfrac1{m^{2}}\big(m\sum_{i\in A}w_i-\abs A\,W\big).\] At $\abs A=m/2$, $g_A=0$ exactly when $\sum_{i\in A}w_i=\sum_{i\notin A}w_i$.
\end{lemma}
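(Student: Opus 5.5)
The proof is a direct computation from the definitions in Definition~\ref{def:instance} and the form of $g_A$ in Lemma~\ref{lem:identities}, followed by the elementary fact $(\ddagger)$. There are three steps.

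\emph{Step 1: the mean.} In $\sigma(m,w)$ every mass equals $1/m$ and $v_i=w_i$. Hence $\mu=\sum_i p_iv_i=\frac1m\sum_i w_i=W/m$.

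\emph{Step 2: the residual.} Substitute into $g_A=\sum_{j\in A}p_j(v_j-\mu)$:
\[
g_A=\frac1m\sum_{i\in A}\Bigl(w_i-\frac Wm\Bigr)=\frac1m\sum_{i\in A}w_i-\frac{\abs A\,W}{m^2}=\frac1{m^2}\Bigl(m\sum_{i\in A}w_i-\abs A\,W\Bigr).
\]
This holds for every $A\subseteq[m]$, including $A=\emptyset$ and $A=[m]$, since $g_A$ is defined as a plain sum and needs none of the hypotheses $0<p(A)<1$ attached to the identity for $\es$.

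\emph{Step 3: the equivalence at $\abs A=m/2$.} With $\abs A=m/2$, the bracket becomes $m\sum_{i\in A}w_i-\tfrac m2W$. Because $m>0$, $g_A=0$ holds iff $\sum_{i\in A}w_i=W/2=\tfrac12\sum_{i\in[m]}w_i$. Applying $(\ddagger)$ with $X=[m]$, $Y=A$ and $x_i=w_i$, this is equivalent to $\sum_{i\in A}w_i=\sum_{i\notin A}w_i$. All quantities are rational, so $(\ddagger)$ applies as stated.

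None of these steps is a real obstacle; the whole argument is bookkeeping. The only points needing care are keeping the factor $1/m^2$ consistent and noting that the equivalence uses $\abs A=m/2$ exactly. The cardinality restriction is where the balanced variant matters, since for other sizes the $\abs A\,W$ term would not cancel against half the total.
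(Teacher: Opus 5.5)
Your proposal is correct and follows the same route as the paper: expand $g_A=\sum_{i\in A}\tfrac1m\bigl(w_i-\tfrac Wm\bigr)$ to get the closed form, then at $\abs A=m/2$ factor out $m$ to reduce $g_A=0$ to $\sum_{i\in A}w_i=W/2$ and apply $(\ddagger)$. Your extra remark that the formula for $g_A$ holds for every $A$, including $A=\emptyset$ and $A=[m]$, is a harmless and accurate refinement of the same bookkeeping.
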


\begin{proof}
We expand $g_A=\sum_{i\in A}\tfrac1m(w_i-\tfrac Wm)$. At $\abs A=m/2$ we factor out $m$ and apply $(\ddagger)$.
\end{proof}

\begin{lemma}\label{apx:lem:gap}
For every instance $\sigma(m,w)$ and every $A$ with $\abs A=m/2$, we have $m^2g_A\in\mathbb Z$, so $g_A=0$ or $\abs{g_A}\ge m^{-2}$. Also, $p(A)=\tfrac12$, so $\es(A)=2\abs{g_A}$. Hence $\ES(N/2)\in\{0\}\cup[2N^{-2},\infty)$ on every constructed instance.
\end{lemma}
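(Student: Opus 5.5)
The plan is to read the integrality directly off the closed form in Lemma~\ref{apx:lem:uniform}, then convert it into a gap for $\es$ using the second identity of Lemma~\ref{lem:identities}.

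First, the integrality claim. For $\abs A=m/2$, Lemma~\ref{apx:lem:uniform} gives
\[
m^2 g_A = m\sum_{i\in A}w_i-\tfrac m2 W .
\]
Since $m$ is even, $m/2$ is an integer. The $w_i$ and $W$ are integers, so both terms are integers and $m^2g_A\in\mathbb Z$. Because $d=1$, $g_A$ is a scalar, so either $g_A=0$ or $\abs{g_A}\ge m^{-2}$.

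Second, the mass identity. Every key carries mass $1/m$, so $p(A)=\abs A/m=\tfrac12$, which lies strictly between $0$ and $1$. Lemma~\ref{lem:identities} therefore applies and gives $\es(A)=\abs{g_A}/p(A)=2\abs{g_A}$. Combining this with the first step, each size-$N/2$ set has $\es(A)=0$ or $\es(A)\ge 2m^{-2}=2N^{-2}$, using $N=m$.

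Third, the statement about the minimum. The number of size-$N/2$ subsets is finite, so $\ES(N/2)$ is attained by some $A$. The value $\ES(N/2)$ therefore inherits the dichotomy and lies in $\{0\}\cup[2N^{-2},\infty)$.

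The only delicate point is the evenness of $m$, which makes $\abs A\,W/m^2$ clear its denominator after multiplying by $m^2$. This is guaranteed by the \textsc{BalancedPartition} input and by the padding of Lemma~\ref{apx:lem:padding}. I do not expect any other obstacle.
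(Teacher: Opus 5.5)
Your proof is correct and follows the paper's argument essentially line for line: $m^2 g_A = m\sum_{i\in A} w_i - \abs{A}\,W$ is an integer, then $p(A)=\tfrac12$ with Lemma~\ref{lem:identities} gives $\es(A)=2\abs{g_A}$, and the dichotomy passes to the minimum. One small correction to your closing remark: integrality only needs $\abs{A}$ to be an integer, which holds because it is a cardinality, so evenness of $m$ is what makes size-$m/2$ sets and the budget $s=m/2$ exist, not what clears the denominator.
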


\begin{proof}
Since $w$ has integer entries, $m^2g_A=m\sum_{i\in A}w_i-\abs A\,W$ is an integer for every $A$ with $\abs A=m/2$, so $g_A=0$ or $\abs{g_A}\ge m^{-2}$. Since $p(A)=\abs A/m=\tfrac12$, Lemma~\ref{lem:identities} gives \[\es(A)=\abs{g_A}/p(A)=2\abs{g_A}\] so each $\es(A)$ is $0$ or $\geq 2m^{-2}$. Then, $\ES(m/2)$ is the minimum of these, so it lies in $\{0\}\cup[2N^{-2},\infty)$ with $N=m$.
\end{proof}

\begin{proof}[Proof of Proposition~\ref{prop:hard}]
Given a subset $A$, a verifier checks $\abs A=s$ and tests $g_A=0$ in exact rational arithmetic. Clearing denominators turns the test into $d$ comparisons of integers with zero, and these integers have bit-length polynomial in the input size. An instance with $\ES(s)=0$ has such a subset, so \textsc{ZeroES} lies in \textsf{NP}.

Next, \textsc{BalancedPartition} is \textsf{NP}-complete (Lemma~\ref{apx:lem:padding}), and $\sigma$ reduces it to \textsc{ZeroES}. If $A$ is a balanced equal split, then $g_A=0$ by Lemma~\ref{apx:lem:uniform}, so $\es(A)=0$ and $\ES(m/2)=0$. Conversely, if $\ES(m/2)=0$, the minimum is attained, so some size-$m/2$ subset has $g_A=0$, and by Lemma~\ref{apx:lem:uniform} that subset splits the sum equally. Every constructed instance has $d=1$, uniform masses, integer points and $s=N/2$.

Now suppose an estimate $\widehat E$ satisfies $\ES(s)\le\widehat E\le f(I)\cdot\ES(s)$ for some function $f\ge1$. When $\ES(s)=0$ this forces $\widehat E=0$, and when $\ES(s)>0$ it forces $\widehat E>0$. Accepting exactly when $\widehat E=0$ therefore decides \textsc{ZeroES} ($f$ is never evaluated). Next suppose $\abs{\widehat E-\ES(s)}<N^{-2}$ on the constructed instances. There $\ES(s)$ is either $0$ or at least $2N^{-2}$ (Lemma~\ref{apx:lem:gap}), so $\widehat E<N^{-2}$ in the first case and $\widehat E>N^{-2}$ in the second. Accepting exactly when $\widehat E<N^{-2}$ therefore decides \textsc{ZeroES}. In both cases a polynomial-time estimate decides an \textsf{NP}-complete problem, so \textsf{P}$=$\textsf{NP} follows.
\end{proof}
\section{Per-head Layer}\label{app:protocol}
This appendix deals with the approximation behind \S\ref{sec:perhead}. We bound $\ES_{\mathrm{pool}}$'s distance above $\ES$ two ways. A sweep of the enumeration cap admits more candidate keys and measures how much the gap changes, and an exact solver either confirms the $\ES_{\mathrm{pool}}$ argmin or finds a better subset. We keep track of these derived statistics: $\hat\kappa_{\mathrm{bal}}=\es(A_{\mathrm{bal}})/\ES_{\mathrm{pool}}$, which understates the balancing selector's distance above the optimum because the candidate set restricts the search, and $\Phi=\Vert g_{A^\star_s}\Vert$ of Lemma~\ref{lem:identities}, which is a one-pass $O(Nd)$ statistic.

\subsection{Setup}

Each sampled row's output sits within $10^{-5}$ of the model's own dense forward in float32 expectation; the capture stops if any row disagrees by more than $4\times10^{-3}$. Models load in bfloat16 with eager attention, and we capture the per-head fields $(P,V,K)$ in float32 and run all enumeration arithmetic in float64. Prose inputs are non-overlapping wikitext-103 test chunks.

The first three documents of every re-captured family reproduce the committed records bitwise, and that slice ($111{,}744$
instances) is the primary set. One instance is one sampled query row, letting $p=P[i,{:}i{+}1]$ and $v=V[{:}i{+}1]$. We use budgets $s\in\{4,8,16,32\}$, six rows per head, every layer, and twenty $4{,}096$-token documents for each headline family, Qwen2.5-1.5B ($28\times12$ layers${}\times{}$heads), Llama-3.2-1B ($16\times32$), OLMo-2-1B ($16\times16$) and Qwen3-1.7B ($28\times16$). The corpus is tokenized once.

We further test six more arms to bring the total to ten. Qwen3-0.6B, Qwen3-4B, gemma-2-2b and gemma-3-1b-pt run under the same protocol, and the gemma pair is captured with its deployed attention geometry (Table~\ref{tab:certificate}), so captured rows are the model's own windowed softmax. The scale runs cover Qwen2.5-7B at $16{,}384$ tokens ($56{,}448$ instances) through a row-streamed capture that bit-matches the eager path, and Qwen3-14B at $4{,}096$ tokens through the same path with a validating five-shard merge. The content run uses prose, code and French sources ($72{,}576$ instances).

\begin{table}[H]\centering\footnotesize
\renewcommand{\arraystretch}{0.98}
\begin{tabular}{l c | c | c | c | c | c}
Family & QK-norm & $\mathrm{AUC}(\bar m_s)$ & edge/int $[95\%]$ & $\rho_{\mathrm{depth}}$ & $\rho(\Phi,\mathrm{ES}_{\mathrm{pool}})$ & $\mathrm{med}\,\hat\kappa_{\mathrm{bal}}$ \\
\hline
Qwen2.5-1.5B & no  & $0.83$ & $1.89\ [1.51,2.29]$ & $0.80$ & $0.91$ & $1.00$ \\
Llama-3.2-1B & no  & $0.80$ & $1.98\ [1.72,2.26]$ & $0.67$ & $0.95$ & $1.00$ \\
Qwen3-1.7B   & yes & $0.85$ & $1.21\ [0.98,1.47]$ & $0.94$ & $0.98$ & $1.00$ \\
OLMo-2-1B    & yes & $0.76$ & $0.74\ [0.62,0.87]$ & $0.51$ & $0.80$ & $1.00$ \\
\hline
Qwen3-0.6B      & yes         & $0.85$ & $1.31\ [1.04,1.60]$ & $0.92$ & $0.98$ & $1.00$ \\
Qwen3-4B        & yes         & $0.76$ & $1.26\ [1.16,1.37]$ & $0.64$ & $0.97$ & $1.00$ \\
gemma-2-2b      & no$^{\dag}$ & $0.89$ & $1.13\ [0.86,1.41]$ & $0.97$ & $0.82$ & $1.00$ \\
gemma-3-1b-pt   & yes$^{\dag}$& $0.82$ & $1.25\ [0.95,1.56]$ & $0.79$ & $0.90$ & $1.00$ \\
\hline
Qwen2.5-7B (16k) & no & $0.85$ & $1.67\ [1.47,1.87]$ & $0.84$ & $0.91$ & $1.00$ \\
Qwen3-14B       & yes         & $0.80$ & $1.19\ [1.11,1.26]$ & $0.59$ & $0.96$ & $1.00$ \\
\end{tabular}
\caption{The ten-arm panel consists of the four headline families, four widening
families, and two scale arms. $\mathrm{AUC}(\bar m_s)$ is at $s=8$ (document-cluster
$95\%$ intervals within $\pm0.008$, and the Qwen2.5-7B row is budget-pooled); ``edge/int'' is the rate of $\hat\kappa>1.11$ at the
network edges over the interior rate, with head-cluster intervals;
$\rho_{\mathrm{depth}}$ rank-correlates the per-layer profiles of dropped mass
and of the $\hat\kappa>1.11$ rate; $\rho(\Phi,\mathrm{ES}_{\mathrm{pool}})$ is the pooled Spearman correlation of $\Phi$
with $\mathrm{ES}_{\mathrm{pool}}$; $\hat\kappa_{\mathrm{bal}}$ is the balancing selector's
median gap (intervals $[1.00,1.00]$; the statistic is discrete,
\S\ref{sec:ceiling}). $^{\dag}$gemma-2-2b replaces QK-norm with soft-capping and
$4{,}096$-token interleaved windows. gemma-3-1b-pt is QK-normalized with $512$-token windows.}
\label{tab:certificate}
\end{table}

\subsection{Enumeration}

We note that $\ES_{\mathrm{pool}}\ge\ES$ on every instance, so every gap is a lower bound, thus there are no false
positives. We evaluate $\ES(s)$ by exhaustive enumeration over the candidate set of Definition~\ref{def:pool} ($24/12/6/4$ keys beyond $\mass_s$ at $s=4/8/16/32$). A key of small mass and small deviation moves neither $p(A)$ nor $m_A$, so it cannot lower $\es$.

We formulate $\ES(s)\le\tau$ as a mixed-integer second-order-cone feasibility problem and solve it by bisection with the $\ES_{\mathrm{pool}}$ argmin as warm start (SCIP, $120$\,s per solve), on a $960$-instance manifest sampled evenly across family, budget and dropped-mass decile. The improvements concentrate at $s\ge16$, with the largest at $75\%$, and $21$ unsolved instances hit memory limits.

Per instance, we find that $\hat\kappa\le1.5$ on $89$ to $90\%$ of instances and $\hat\kappa\le2$ on $97\%$, over the eight-family grid at $s\in\{4,8\}$.

\subsection{Penalty}\label{app:penalty}
The estimate $\hat\pi=\ES_{\mathrm{pool}}/\EF_{\mathrm{FW}}$ of $\pi(s)$, where $\EF_{\mathrm{FW}}$ is the Frank--Wolfe \citep{jaggi} estimate of $\EF(s)$, has unsigned bias because its numerator and denominator are both over-estimates. The median of $\hat\pi$ rises with $\bar m_s\sqrt s$ in every family. The same statistic separates the instances with $\hat\kappa>1.11$ from the rest, since the rank correlation of $\mathbf 1\{\hat\kappa>1.11\}$ with $\bar m_s$ is $0.42$ to $0.46$ in every family while its correlation with $D$ is weak and its sign is unstable across families. One family carries massive-activation outliers (\cite{spikesink}) that we measure at radius $D\approx500$ at its deep layers, but those layers have $\bar m_s\approx0.07$ and almost no instances with $\hat\kappa>1.11$.

Let $A^{\mathrm{geo}}_s$ be a minimizer of $\mathrm{ef}$, computed by Frank--Wolfe and padded to size $s$ with top-mass keys, and let $A^{\mathrm{lev}}_s$ be the $s$ keys of largest ridge leverage $k_j^{\top}(K^{\top}K+\lambda I)^{-1}k_j$. The median of $\mathrm{es}(A^{\mathrm{geo}}_s)/\mathrm{es}(A^\star_s)$ rises to $2.06$ with budget and the median of $\mathrm{es}(A^{\mathrm{lev}}_s)/\mathrm{es}(A^\star_s)$ rises to $20.3$ (Table~\ref{app:tab:perbudget}). The balancing selector achieves $\ES_{\mathrm{pool}}$ at the median in every family and budget after a median of $1$ to $2$ swaps. Swap costs are in the Cost paragraph below.

\subsection{Sensitivity of Certification}\label{app:cert}
Substituting the solver's subset for the $\ES_{\mathrm{pool}}$ argmin where improvement exists, then recomputing every statistic paired, moves the paired manifest median $\hat c$ by at most $+0.0063$ (at $s=32$, from $0.051$ to $0.057$) and the median $\hat\kappa_{\mathrm{bal}}$ by $3\times10^{-7}$. Because the balancing selector and the exhaustive search draw from the same candidate set, a subset outside it could beat both. Thus, the solver searched the full key set and found $46$ improvements ($0/2/8/36$ by budget). The median improvement is $3.8\%$ and the largest is $75\%$ (Table~\ref{app:tab:cert}). After substitution, the fraction of instances where the found optimum beats balancing materially (by more than $1\%$) rises from $1.3$ to $5.1\%$ at $s=32$ and stays below $5\%$ elsewhere. Because raising at most a fraction $f$ of a sample moves its median no higher than the old $(0.5{+}f)$-quantile, the Wilson-$95$ bound on each improvement rate caps the population median distribution-free (at most $0.108$ at $s=32$). The balancing median cannot move because $\hat\kappa_{\mathrm{bal}}=1$ exactly on $87.0$ to $94.0\%$ of instances per budget and the worst (family, budget) cell has margin $+0.17$. The last column of the table counts the unsolved instances as improved.

\begin{table}[H]\centering\small
\renewcommand{\arraystretch}{1.05}
\begin{tabular}{c | c c | c c c}
$s$ & improved & Wilson-$95$ $f_{\mathrm{hi}}$ & median & cap at $f_{\mathrm{hi}}$ & cap, unsolved improved \\
\hline
$4$  & $0/237$  & $0.016$ & $0.018$ & $0.022$ & $0.027$ \\
$8$  & $2/234$  & $0.031$ & $0.040$ & $0.047$ & $0.057$ \\
$16$ & $8/234$  & $0.066$ & $0.047$ & $0.064$ & $0.073$ \\
$32$ & $36/234$ & $0.206$ & $0.046$ & $\mathbf{0.108}$ & $0.119$ \\
\end{tabular}
\caption{Certification substitution at a glance. ``improved'' counts manifest
instances where the solver beat $\ES_{\mathrm{pool}}$ by more than $1\%$;
$f_{\mathrm{hi}}$ is the Wilson-$95$ upper bound on that rate; ``median'' is the
per-budget closable-share median over the four audited families' full record sets
($186{,}240$ instances per budget); the caps are the distribution-free worst-case
medians after substitution, at $f_{\mathrm{hi}}$ and with the $21$ unsolved
instances also counted as improved. The corresponding caps on the median
$\hat\kappa_{\mathrm{bal}}$ are $1.00$ everywhere.}
\label{app:tab:cert}
\end{table}

Per-budget $95\%$ intervals on the median $\hat c$ sit inside $[0.017,0.051]$. The fraction of the top-mass-to-dense gap that a better kept set can still remove holds at its small-budget level through $s=256$, even as the dropped mass falls roughly tenfold (median). At the deployed budgets the per-family medians are $0.04$ to $0.13$, and the balancing subset beats top-mass by at least $10\%$ on $28$ to $58\%$ of rows. Exact enumeration cannot reveal this since it stops at $s=32$. The anchor (the balancing-subset bound) still applies, because whatever the balancing subset closes, $1-\es_{\mathrm{bal}}/\es(\mass_s)$, can also be closed by the best subset. Table~\ref{app:tab:anchor} analyzes the bound from $s=4$ to $256$ on the two audited families.

\subsection{Reference tables}\label{app:perheadtables}
\begin{table}[H]\centering\small
\renewcommand{\arraystretch}{1.05}
\begin{tabular}{c | ccc | cc | cc | c}
$s$ & $\mathrm{med}\,\kappa$ & $p99\,\kappa$ & share &
$\dfrac{\es(A^{\mathrm{geo}}_s)}{\es(\mass_s)}$ & $\dfrac{\es(A^{\mathrm{lev}}_s)}{\es(\mass_s)}$ &
$\mathrm{med}\,\pi$ & $p99\,\pi$ & $\rho(\Phi,\ES_{\mathrm{pool}})$ \\[2pt]
\hline
$4$  & $1.003$ & $2.28$ & $23.5\%$ & $1.40$ & $6.8$  & $1.07$ & $2.18$ & $0.84$ \\
$8$  & $1.025$ & $2.27$ & $25.9\%$ & $1.73$ & $9.7$  & $1.16$ & $2.90$ & $0.92$ \\
$16$ & $1.034$ & $2.15$ & $26.5\%$ & $1.96$ & $13.8$ & $1.39$ & $4.27$ & $0.96$ \\
$32$ & $1.033$ & $1.91$ & $23.6\%$ & $\mathbf{2.06}$ & $\mathbf{20.3}$ & $\mathbf{1.79}$ & $6.75$ & $0.98$ \\
\end{tabular}
\caption{The per-budget reference profile for the family Qwen2.5-1.5B, with $6{,}048$ instances per
budget. ``share'' is the fraction with $\ES\le0.9\,\es(\mass_s)$. The geometric
column is the Frank--Wolfe support padded to size $s$ with top-mass keys, and the
padding only helps it under the operator. The leverage column is the ridge-leverage
subset. All quantities are restricted to the candidate set, one-sided in the direction stated above.
Across families the geometric ratio runs $1.3$ to $2.6$ and the leverage ratio $7$
to $32$ at $s=32$.}
\label{app:tab:perbudget}
\end{table}

\subsection{Other Details}
\paragraph{The allocation panel.} Re-allocating budget by the map raises pooled reconstruction error in every family, with $\Delta\es$ of $+0.044$ (Llama-3.2-1B), $+0.144$ (OLMo-2), $+0.255$ (Qwen2.5-1.5B) and $+1.032$ (Qwen3-1.7B) at unit budget $8$ under the balancing selector. Cell-cluster intervals exclude zero, and the sign holds at unit budget $16$ and under top-mass selection. The deltas are against uniform at matched total budget, with mass-greedy \citep{adakv} and pyramid-shaped \citep{pyramidkv} allocators in the same panel. A flat cell's closable part is small (\S\ref{sec:ceiling}), so extra budget there buys little.

\paragraph{Absolute forms.} Both prior-art bounds classify the label $\mathbf 1\{\hat\kappa>1.11\}$ worse than $\bar m_s$ alone in every family (AUC $0.58$ to $0.87$ and $0.58$ to $0.86$ against $0.76$ to $0.89$). The diameter form $2D\bar m_s$ and the variance refinement $\sqrt{\bar m_s/(1-\bar m_s)}\,\sigma$ \citep{tvtopk} rank the top-mass set's own error at Spearman $0.63$ to $0.95$ and $0.89$ to $0.98$ but fall to $0.23$ to $0.70$ and $0.21$ to $0.68$ against the gap (bare $\bar m_s$ reaches $0.48$ to $0.73$). The family ordering at $s=8$ holds over the central margins of the $\hat\kappa\ge1.00$ to $1.43$ sweep (Kendall $\tau=+1.0$), and relabeling against the certified optima leaves per-family AUCs at $0.72$ to $0.87$, with wide intervals that overlap the committed spans.

\paragraph{Cost.} One balancing swap round ($192$ drop-add pairs at $\Theta(d)$) costs about $2\%$ of the row's $QK^{\top}$ at $N=4{,}096$, with a median of 1 to 2 swaps over the $111{,}744$ committed records. The rule deploys at decode, where $\mu$ is the step's already-computed output and the swap is one incremental row. As a batch prefill mask, one balancing swap round is launch-bound at $20\times$ to $114\times$ a dense layer and would further need $\mu$ (one $PV$ matmul, $0.31\times$ to $0.43\times$ dense). The map costs one offline dense forward over sixteen documents and amortizes across serving.
\section{Selector Details}\label{app:audittables}
We score each rule's kept set against the enumerated optimum $\mathrm{ES}_{\mathrm{pool}}$
on the same $111{,}744$ instances as \S\ref{sec:audit}, joined row by row to the recorded
runs. SnapKV runs with window $\max(1,\min(32,s/2))$ and kernel $7$ with max pooling, H2O
with accumulated mass and recency $s/2$, Quest with pages of $\max(2,s/4)$ and a
page-$16$ configuration at $s\ge16$, and StreamingLLM with $\min(4,s/2)$ sinks plus
recency. TOVA's per-query keep set is identical to top-mass, and this identity is checked at
runtime, so we do not score it separately. Budgets are accounted in two ways: per-query,
where a rule re-chooses its set at every query, and cache-faithful, where one evicted
cache serves all queries, so the per-query reading is never above the cache-faithful one.
\begin{figure}[H]\centering
\includegraphics[width=0.9\linewidth]{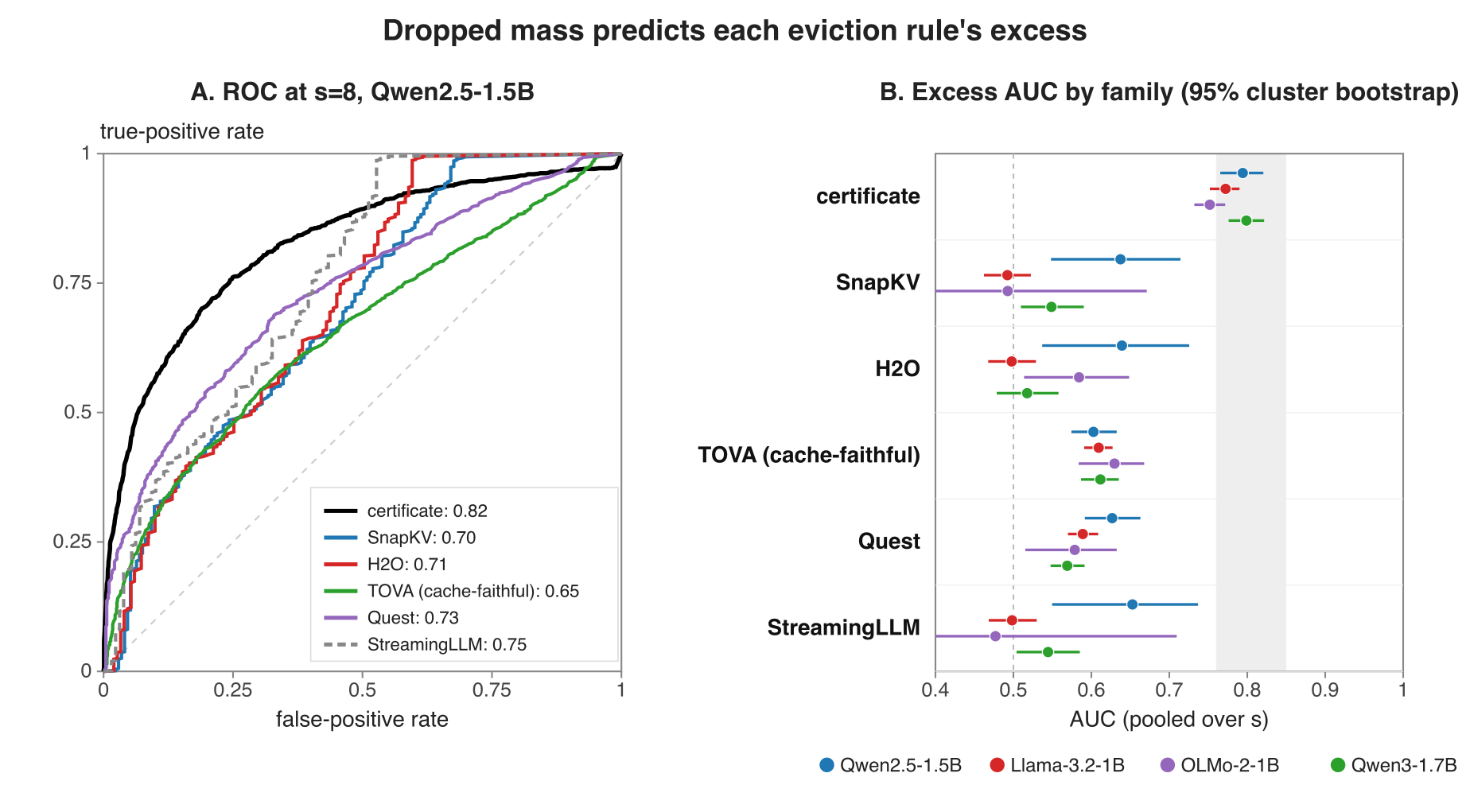}
\caption{The left panel shows the ROC of dropped mass as a predictor of each rule's excess label at $s=8$ on the reference family, and the right panel shows the same AUC pooled per family with $95\%$ intervals. The grey band is the AUC range of \S\ref{sec:certificate} and the dashed line is chance, and the row labeled ``certificate'' scores $\bar m_s$ itself.}
\label{fig:audit}
\end{figure}

\paragraph{Small budgets.} At $s=8$ the four scored rules sit at a median $1.4$ to $2.1$ times $\mathrm{ES}_{\mathrm{pool}}$ against $1.02$ to $1.03$ for top-mass on the three sharper families. Top-mass exceeds $1.11\,\mathrm{ES}_{\mathrm{pool}}$ on as few as $19\%$ of instances, while each deployed rule exceeds it on $75$ to $99\%$ of instances at $s\le32$, so the label $\mathbf 1\{\mathrm{es}_{\mathrm{method}}>1.11\,\mathrm{ES}_{\mathrm{pool}}\}$ is almost always $1$ and its AUC, pooled across families, is $0.49$ to $0.64$ for mass rules and a positional control alike (Fig.~\ref{fig:audit}). At the deployed budgets the label is $1$ on fewer instances, and $\bar m_s$ predicts it, since three of the four mass-based selectors clear AUC $0.70$ at $s=128$ and all four clear it at $s=256$ on the reference family. Ranking each rule's excess $\mathrm{es}_{\mathrm{method}}/\mathrm{ES}_{\mathrm{pool}}$ by $\bar m_s$ reaches Spearman $0.70$ to $0.84$ on the families without query-key normalization and $0.36$ to $0.61$ on the two with it. SnapKV's error holds at $1.8$ to $2.2$ times the same-budget top-mass error through the deployed budgets. Cache-faithful accounting raises every rule's median excess, to about $3.0$ for the sequential rules and $4.2$ for the frozen-set SnapKV reading at $s=8$. Quest is evaluated below the page sizes that it was designed for ($2$ to $8$ keys per page at $s\le32$). At $s\in\{64,128,256\}$ exact enumeration is infeasible, so the references are top-mass and the balancing subset, and OLMo-2-1B reaches AUC $0.70$ only at $s=256$, which is one budget later than the reference family.

\paragraph{Held-out loss.} At $s=8$ Quest increases it the least and StreamingLLM increases it the most. Per-query loss stays at or below cache-faithful loss (whenever both readings exist). On Qwen3-1.7B the flat-cell switch recovers $0.18$ to $0.65$ of the gap to all-cell balancing, which is consistent with \S\ref{sec:teeth} on the same family.
\section{Deployment Details}\label{app:campaign}\subsection{Compared methods}

The baselines ran with one all-true mask over the compressed cache and ignored which positions remained after eviction. The per-layer forms mask according to the positions actually kept. We mark every comparison that crosses the two, and the absolute levels in the baseline tables are affected.

\subsection{Costs and uncertainty}
The byte figures of \S\ref{sec:bytes} come from per-layer memory records in our harness. In the replay, SnapKV and Compactor each hold between $0.09$ and $3.6\%$ of the dense cache with identical byte counts, while KVzip holds all of it at every condition, which is $28$ to $1078$ times SnapKV's bytes. ContourKV without enforcement holds $3.5$ to $84\%$ of the dense cache, or $20$ to $96$ times the budget-enforcing methods' bytes and a median of $43.9$ times SnapKV's, since the unenforced mode over-keeps to each layer's widest cell at $11$ to $85$ times its nominal budget.

SnapKV and our CompressKV reimplementation each cost $1.03$ to $1.13$ times a dense context prefill, while ContourKV's reconstruction score costs $8.7$ to $13.6$ times ($7.744$ seconds against $0.605$ on Llama-3.1-8B), although its own method class reports $2$ to $3$ times for itself.

All intervals come from one estimator, a cluster bootstrap over documents with $B=1000$ and $95\%$ percentile intervals. When both methods were measured in the same batch we take paired per-document differences, and otherwise we resample the two samples independently. Neither the resampling nor the averaging ever crosses from the benchmark setting into the retrieval one. We treat any margin below $0.05$ points as too small to interpret, though we still count it toward the totals by its sign.

\subsection{Per-condition results}

ContourKV's margins over KVzip at RULER $s=128$ run $10.4$ to $76.4$ points across the fourteen model and length cells, and every interval excludes zero. ContourKV's only losses with intervals excluding zero, against any opponent, are LongBench repobench-p on Llama-3.1-8B at both budgets ($-7.3$ and $-6.6$ points). With the budget enforced, sixteen of the twenty-two losses to KVzip sit at RULER $s=32$, where KVzip scores $12.6$ to $17.6$ from the full cache and SnapKV scores $3.6$ to $5.5$.

\paragraph{Equal-memory allocation and retrieval.} At RULER $s=128$ the window-scored per-layer variant falls $13.55$ points below Compactor and $10.75$ points below the uniform run. In the retrieval runs, the $s=64$ record against SnapKV is $7/7/1$ and every advantage in the $48$-condition tally falls at $s\ge64$, and the window-scored version holds $0.82$ times SnapKV's bytes at $s=64$ and $1.17$ times at $s=128$. The reconstruction score concentrates the budget so strongly that padding each layer's block to its widest head consumes up to seven eighths of the memory at grouped-query width $8$, and we stopped that variant when its answer-span coverage fell $5.4$ to $31.6$ points below the uniform run, as the rule that we had fixed in advance required. Qwen3-14B fails retrieval under both the window-scored version and SnapKV for reasons specific to that lineage.

Our CompressKV reimplementation selects identical entries to the official implementation on all $328$ test cases. Its complete record covers $80$ cells at exactly matched memory on Llama-3.1-8B and Qwen2.5-7B, which are the two models that its authors provide calibration for. It is $7/47/26$ against ContourKV with the budget enforced ($-0.85$ points in aggregate), with the RULER $s=128$ block at $0/0/8$ and $-9.39$. It is $13/60/7$ against SnapKV at $+0.89$, $15/55/10$ against the per-layer variant of \S\ref{sec:square} at $+0.79$ $[+0.24,+1.38]$, which is its one favorable comparison and is concentrated on Llama-3.1-8B, and $11/36/33$ against Compactor, where it trails by $4.96$ at RULER $s=32$ (ContourKV's own aggregate against Compactor runs $-1.9$ to $+0.7$). Its layer allocation does not vary at $s=32$ (a uniform $32$ entries per layer) and varies at $s=128$ ($54$ to $169$ per layer on Llama-3.1-8B and $63$ to $174$ on Qwen2.5-7B), so the $s=32$ comparison tests its scoring alone and $s=128$ adds its allocation, where the deficit is largest.
\section{Reconstruction Boundary} \label{app:scope}
The prefill probe of \S\ref{sec:teeth} is a rank-$16$ scorer trained on the query and key projections of Qwen2.5-1.5B and Llama-3.2-1B to reproduce the balancing swaps (\citep{oracleprefill}), and it recovers at most $5\%$ of the decode-time loss reduction against a $25\%$ bar fixed in advance. It is zero-initialized, so an untrained probe selects identically to top-mass, and it was trained on $48$ documents disjoint from evaluation, where the mixing weight between the probe and the top-mass score was calibrated on half of the held-out documents and checked on the other half. Its held-out swap recall is $0.33$ and $0.39$, and when deployed it moves held-out cross-entropy at $s=8$ by $+0.010$ and $-0.006$, where balancing in the same run moves it by $-0.119$ and $-0.118$.

To test whether the map picks the right cells, we switched the same number of cells at random instead. On Qwen2.5-1.5B the map recovers $94\%$ of the all-cell gain and the random set recovers $66\%$, so two thirds of the map's gain comes from set size alone; on Qwen3-1.7B the random set recovers $77\%$ and the map recovers $15\%$, and no ranking of cells that we tried beat the random set (Fig.~\ref{fig:control}). Across the three Qwen3 scales the map keeps $55$/$15$/$31\%$ of the all-cell gain. Both configurations sit far from the dense output, so the difference between their cross-entropies is second order and is set by the local curvature. At deployment scale, the reconstruction-ranked per-layer batch was withheld when its answer-span coverage collapsed at matched memory (\S\ref{sec:square}), and the reconstruction-scored state of the art loses when scored at its measured bytes (\S\ref{sec:bytes}).

\begin{figure}[H]\centering
\includegraphics[width=0.95\linewidth]{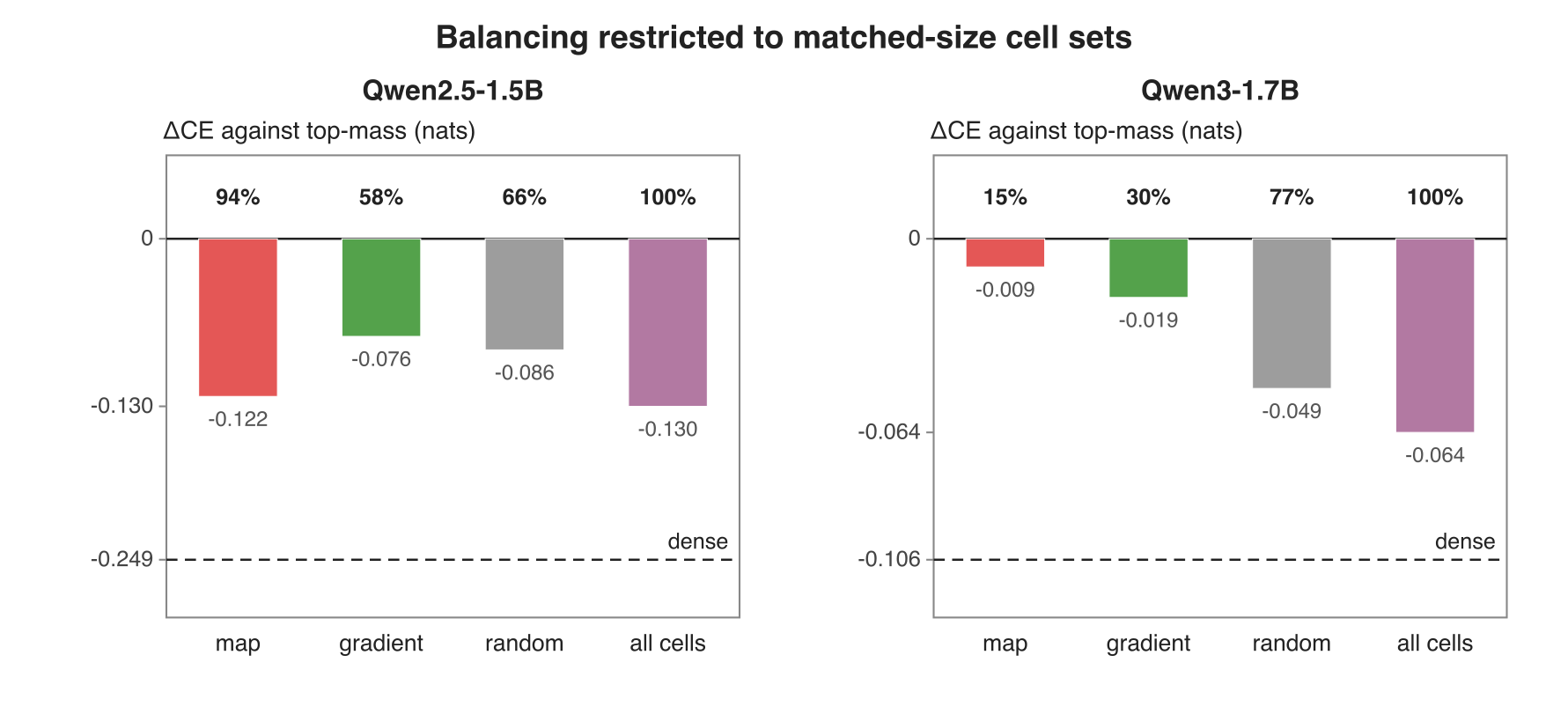}
\caption{Change in held-out cross-entropy against top-mass at $s=8$ when balancing only the labeled cells. The percentages give each set's fraction of the gain from balancing all cells.}
\label{fig:control}
\end{figure}

We measured the local curvature directly, from loss gradients and Hessian-vector products at each head's output-projection input. The second-order prediction built from it matches the measured shares of the loss gain ($0.86$ to $0.95$ against $0.88$ to $0.98$), and the conversion constant is the loss change per unit of flat-cell reconstruction reduction.


\end{document}